\newcommand{\ETH}{$\mathsf{ETH}$ }
\newcommand{\NP}{$\mathsf{NP}$ }
\newcommand{\XP}{$\mathsf{XP}$ }
\newcommand{\FPT}{$\mathsf{FPT}$ }
\newcommand{\set}[1]{\left\{#1\right\}}
\newcommand{\NPoly}{\textsf{NP} $\subseteq$ \textsf{coNP}$/$\textsf{poly}\xspace}
\newtheorem{theorem}{Theorem}[section]
\crefname{theorem}{theorem}{theorems}
\newtheorem{lemma}[theorem]{Lemma}
\crefname{lemma}{lemma}{lemmas}
\newtheorem{proposition}[theorem]{Proposition}
\crefname{proposition}{proposition}{propositions}
\crefname{result}{result}{results}
\crefname{corollary}{corollary}{corollaries}
\crefname{fact}{fact}{facts}
\newtheorem{observation}[theorem]{Observation}
\crefname{observation}{observation}{observations}
\newtheorem{question}[theorem]{Question}
\crefname{question}{question}{questions}
\newtheorem{claim}[theorem]{Claim}
\crefname{claim}{claim}{claims}
\crefname{note}{note}{notes}
\crefname{conj}{conjecture}{conjectures}
\crefname{definition}{definition}{definitions}
\crefname{remark}{remark}{remarks}
\newcounter{claimcounter}
\numberwithin{claimcounter}{lemma}
\newenvironment{proofofclaim}{%
    
  \proof}{\endproof}
\tikzstyle{noeud}=[circle,inner sep=2, minimum size =3 pt, line width = 1pt, draw=black, fill=white]
\newcommand{\Pb}[4]{%
\begin{center}
  \begin{tabular}{|l|}%
  \hline
    \begin{minipage}[c]{0.95\textwidth}
      \smallskip%
      \par\noindent%
      #1%
      \par\noindent%
      \textbf{\textsf{Input}}: #2%
      \par\noindent%
      \textbf{\textsf{#3}}: #4 
      \smallskip%
      \par\noindent%
    \end{minipage}
  \\\hline
  \end{tabular}%
\end{center}
}%
\begin{document}
\title{Learning Small Decision Trees with Few Outliers: A Parameterized Perspective}
\author[label1]{Harmender Gahlawat}

\author[label2]{Meirav Zehavi}

\affiliation[label1]{organization={Universit\'e Clermont Auvergne, CNRS, Clermont Auvergne INP, Mines Saint-\'Etienne, LIMOS},
            city={Clermont-Ferrand},
            postcode={63000},
            country={France}}

\affiliation[label2]{organization={Ben-Gurion University of the Negev},
            city={Beersheba},
            country={Israel}}

\begin{keyword}Decision trees \sep%
Parameterized complexity \sep%
Machine learning
\end{keyword}

\newcommand{\CI}{$\mathsf{CI}$\xspace}
\newcommand{\DT}{$\mathsf{DT}$\xspace}
\newcommand{\DTL}{\textsc{Decision Tree Learning}\xspace}

\newcommand{\DTS}{\textsc{DTS}\xspace}
\newcommand{\DTD}{\textsc{DTD}\xspace}
\newcommand{\DTSO}{\textsc{DTSO}\xspace}
\newcommand{\DTDO}{\textsc{DTDO}\xspace}
\newcommand{\HS}{\textsc{HS}\xspace}
\newcommand{\SC}{\textsc{Set Cover}\xspace}

\newcommand{\IPC}{\textsc{Isometric Path Cover}\xspace}
\newcommand{\RnJ}{\textsc{Romeo and Juliet}\xspace}
\newcommand{\PC}{\textsc{Path Cover}\xspace}
\newcommand{\IPP}{\textsc{Isometric Path Partition}\xspace}
\newcommand{\SGS}{\textsc{Strong Geodetic Set}\xspace}
\newcommand{\GS}{\textsc{Geodetic Set}\xspace}
\newcommand{\PART}[1]{\textsc{#1-Partition}\xspace}
\newcommand{\IPART}[1]{\textsc{Induced #1-Partition}\xspace}
\newcommand{\IPATHCV}{\textsc{Induced Path Cover}\xspace}
\newcommand{\MONOSET}{\textsc{Monophonic Set}\xspace}
\newcommand{\VC}{\textsc{Vertex Cover}\xspace}
\newcommand{\FVS}{\textsc{Feedback Vertex Set}\xspace}

\begin{abstract}
Decision trees are a fundamental tool in machine learning for representing, classifying, and generalizing data. It is desirable to construct ``small'' decision trees, by minimizing either the \textit{size} ($s$) or the \textit{depth} $(d)$ of the \textit{decision tree} (\textsc{DT}). Recently, the parameterized complexity of \textsc{Decision Tree Learning} has attracted a lot of attention.
We consider a generalization of \textsc{Decision Tree Learning} where given a \textit{classification instance} $E$ and an integer $t$, the task is to find a ``small'' \textsc{DT} that disagrees with $E$ in at most $t$ examples. We consider two problems: \textsc{DTSO} and \textsc{DTDO}, where  the goal is to construct a \textsc{DT} minimizing $s$ and $d$, respectively. We first establish that both \textsc{DTSO} and \textsc{DTDO} are W[1]-hard when parameterized by $s+\delta_{max}$ and $d+\delta_{max}$, respectively, where $\delta_{max}$ is the maximum number of features in which two differently labeled examples can differ. We complement this result by showing that these problems become \textsc{FPT} if we include the parameter $t$. We also consider the kernelization complexity of these problems and establish several positive and negative results for both \textsc{DTSO} and \textsc{DTDO}.
\end{abstract}

\maketitle

\section{Introduction}
{\let\thefootnote\relax\footnotetext{A preliminary version of this paper appeared in the proceedings of the 38th Annual AAAI Conference on Artificial Intelligence (AAAI 2024)~\cite{ourAAAIDTL}.}}\par

Decision trees is a fundamental tool in the realm of machine learning with applications spanning classification, regression, anomaly detection, and recommendation systems~\cite{larose2014discovering,murthy1998automatic, quinlan1986induction}. Because of their ability to represent complex labeled datasets through a sequence of simple binary decisions, they provide a highly interactive and interpretable model for data representation~\cite{darwiche2020reasons, doshi2017roadmap,goodman2017european,lipton2018mythos,monroe2018ai}. It is of interest to have \textit{small trees}, as they require fewer tests to classify data and are easily interpretable. Moreover, small trees are expected to generalize better to  new data, i.e., minimizing the number of nodes reduces the chances of overfitting~\cite{bessiere2009minimising}.  However, as  problem instances grow in size and complexity, efficiently learning small decision trees becomes a challenging task. In particular, it is \NP-hard to decide if a given dataset can be represented using a decision tree of a certain size or depth~\cite{laurent1976constructing}. To deal with this complexity barrier, implementations of several heuristics based on constraint-based and SAT-based techniques have been proposed to learn small decision trees~\cite{bessiere2009minimising, narodytska2018learning, avellaneda2020efficient, schidler2021sat}.  In fact, the classical \textsc{CART} heuristic herein is among the top 10 algorithms of data mining chosen by the ICDM~\cite{steinberg2009cart,wu2008top}.

Despite these efforts, our understanding of the computational complexity of learning small decision trees is limited. Recently, research of the parameterized complexity of learning small decision trees has attracted a lot of attention~\cite{eibenLargeDomain,DTGeometry,ordyniak,komusiewicz2023computing}. Parameterized complexity theory provides a framework for classifying computational problems based on both their input size and a parameter that captures the inherent difficulty of the problem. The key notion in parameterized complexity is that of \textit{fixed parameter tractability} (\FPT), which restricts the exponential blowup in the time to be a function only of the chosen \textit{parameter}. Due to its efficacy, parameterized complexity theory has been extensively used to understand the complexity of several problems arising in AI and ML; see, e.g.~\cite{backstrom2012complexity,bessiere2008parameterized,bredereck2017parliamentary,ganian2018parameterized,gaspers2017backdoors}. An important framework within parameterized complexity is that of \textit{kernelization}, polynomial-time preprocessing with a parametric guarantee. Due to its profound impact, kernelization was termed ``the lost continent of polynomial time''~\cite{kernelApplication}. Kernelization is specifically useful in practical applications as it has shown tremendous speedups in practice~\cite{gao2009data, guo2007invitation,niedermeier2000general,weihe1998covering}.

The input to a decision tree learning algorithm is a \textit{classification instance} (\CI) $E$, which is a set of \textit{examples} labeled either positive or negative, where each \textit{example} is defined over the same set of \textit{features} $F$ that can take values from a linearly ordered domain $D$. We define these concepts (along with other definitions) in Section~\ref{S:prelim}. Now, in \DTS  (resp., \DTD), the input is a positive integer $s$ (resp., $d$) and a \CI $E$, and the goal is to decide whether there exists some \textit{decision tree} (\DT)  of \textit{size} at most $s$ (resp., \textit{depth} at most $d$) that ``\textit{classifies}'' each example of $E$ correctly. The \textit{size} of a \DT $T$ is the number of \textit{test nodes} in $T$ and the \textit{depth} of a $T$ is the maximum number of test nodes on a leaf to root path of $T$. Some parameters of the input that are of interest, in addition to $s$ and $d$, are: $|F|$ being the number of features in $E$, $D_{max}$ being the maximum value a feature of an example can take, and $\delta_{max}$ being the maximum number of features a positive and a negative example can differ in. All of these features are often small compared to the size of the input instance and hence are good choices to achieve \FPT algorithms (for e.g., see Table~1 in~\cite{ordyniak}).  

In practice, it is often acceptable to allow for small margins of error. We consider a generalization of \textsc{Decision Tree Learning} where, given a \CI $E$ and an integer $t$, the goal is to compute a \DT $T$ of minimum size or depth such that $T$ disagrees with at most $t$ examples of $E$. Notably, even when we are allowed to have 1 \textit{outlier} (i.e., $t=1$), the instance used to prove W[1]-hardness of \DTS and \DTD~\cite{ordyniak} admits a \DT of size and depth 0. Hence, a little slack (a few outliers) can decrease the complexity of the resulting tree significantly. Moreover, as shown by Ruggieri~\cite{yadt} in the results of YaDT on the benchmark data set, in practice, allowing even a small number of outliers can decrease the size of the optimal decision tree substantially.
We term the corresponding problems for \DTD and \DTS as \DTDO and \DTSO, respectively. (Here, $\mathsf{O}, \mathsf{S}$, and $\mathsf{D}$ signify outlier, size and depth, respectively.) Finally, observe that \DTDO and \DTSO model \DTD and \DTS, respectively, when $t=0$.

\medskip
\noindent\textbf{Previous Work.} The parameterized complexity of learning small decision trees has attracted significant attention recently. Ordyiak and Szeider~\cite{ordyniak} established that both \DTD and \DTS are W[1]-hard parameterized $d$ and $s$, respectively, even for binary instances. On the positive side, they proved that \DTS (resp., \DTD) is \FPT parameterized by $s+\delta_{max}+D_{max}$ (resp., $d+\delta_{max}+D_{max}$). Kobourov et al.~\cite{DTGeometry} established that \DTS is W[1]-hard parameterized by $|F|$,  \XP parameterized by $|F|$, and \FPT when parameterized by $s + |F|$. More recently, Eiben et al.~\cite{eibenLargeDomain} established that both \DTS and \DTD are \FPT parameterized by $s+\delta_{max}$ and $d+\delta_{max}$, respectively. 

\medskip
\noindent\textbf{Our Contribution.}
We study the parameterized and kernelization complexity of \DTDO and \DTSO. The notations used in this section are definded in Section~\ref{S:prelim}. We start by establishing that \DTDO and \DTSO are W[1]-hard parameterized by $d+\delta_{max}$ and $s+\delta_{max}$, respectively. For this purpose, we provide an involved reduction from \textsc{Partial Vertex Cover} to \DTSO and \DTDO.

\begin{restatable}{theorem}{WHard}\label{th:Whard}
\textsc{DTDO} and \DTSO are $W[1]$-hard when parameterized by $d+\delta_{max}$ and $s+\delta_{max}$, respectively. Further, they are $W[1]$-hard parameterized by $s$ and $d$, respectively, even if $\delta_{max} \leq 3$.
\end{restatable}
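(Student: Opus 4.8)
The plan is to reduce from \textsc{Partial Vertex Cover} (\textsc{PVC}), which is $W[1]$-hard parameterized by the solution size $k$: given a graph $G=(V,E)$ with $|E|=m$ and integers $k,p$, decide whether some $k$ vertices cover at least $p$ edges. I would build a single binary classification instance that serves both \DTSO and \DTDO simultaneously. Introduce one feature $f_v$ per vertex $v\in V$. For every edge $e=\{u,v\}$ create a negative example $x_e$ that is $1$ exactly on $f_u$ and $f_v$ and $0$ elsewhere; note that $x_e$ differs from the all-zero vector in only two coordinates. To anchor the classification, add a bundle of positive examples that all look like the all-zero vector on the vertex-features, and set the size (resp.\ depth) bound to $k$ and the outlier budget to $t=m-p$. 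Since the instance is binary, every useful test has the form ``$f_w\ge 1$,'' and each positive example takes the low branch at all such tests.

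For completeness, suppose $k$ vertices $v_1,\dots,v_k$ cover at least $p$ edges. I construct the caterpillar (decision-list) tree that tests $f_{v_1},\dots,f_{v_k}$ down its left spine; at the node testing $f_{v_i}$ the high branch is a negative leaf (receiving exactly the still-unclassified edges incident to $v_i$, all negative), and the final low leaf is declared positive. This tree has exactly $k$ test nodes and its longest root-leaf path is the spine of length $k$, so both its size and its depth equal $k$. Every covered edge reaches a negative leaf, and the only misclassified examples are the edges with both endpoints outside $\{v_1,\dots,v_k\}$, of which there are at most $m-p=t$. Hence the same tree witnesses a solution to both \DTSO and \DTDO.

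The heart of the argument is soundness. Because the positive bundle is larger than $t$ and all its members are identical on every feature a tree can meaningfully test, they all reach a common leaf $\ell$; misclassifying them would exceed the outlier budget, so $\ell$ must be labeled positive. Now take any correctly classified edge-example $x_e$ with $e=\{u,v\}$, i.e.\ one reaching a negative leaf. Since $x_e$ agrees with the positive examples on every feature except $f_u$ and $f_v$, the only way its root-leaf path can leave the positive examples' path is at a test of $f_u$ or $f_v$ on which $x_e$ takes the high branch. Thus $x_e$ has an endpoint among the features tested along the positive examples' root-leaf path. In the depth setting this path has length at most $d$, and in the size setting the whole tree has at most $s$ test nodes; in either case the vertices tested along that path form a set $S$ with $|S|\le k$ that covers every correctly classified edge. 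As at most $t=m-p$ edges are misclassified, $S$ covers at least $p$ edges, yielding the required \textsc{PVC} solution. Since the instance is binary with $\delta_{max}$ bounded by a constant, $W[1]$-hardness in $k$ transfers to hardness parameterized by $s$ and by $d$, and a fortiori by $s+\delta_{max}$ and $d+\delta_{max}$.

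I expect the main obstacle to be the anchoring gadget rather than the combinatorial correspondence. The clean version above uses many copies of a single positive example, but if a classification instance must consist of pairwise distinct examples, the bundle has to be realized with auxiliary ``identifier'' features while simultaneously (i) guaranteeing that these features can never be tested profitably, so that the bundle still collapses onto one root-leaf path as the soundness argument requires, and (ii) keeping the number of features in which a positive and a negative example differ within the claimed bound $\delta_{max}\le 3$. Engineering identifier features that neither enable a profitable split of the bundle nor inflate $\delta_{max}$ is the delicate part; once it is in place, the forward and backward directions are exactly the correspondence sketched above.
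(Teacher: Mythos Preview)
Your overall plan matches the paper's: reduce from \textsc{PVC}, use a caterpillar for the forward direction, and extract a cover from the root-to-positive-leaf path for soundness. The multiset version you sketch is correct, but since a \CI is a \emph{set} of distinct examples, the obstacle you flag is genuine and is in fact the heart of the proof. However, the fix you anticipate --- identifier features that ``can never be tested profitably'' so that the bundle still collapses to one leaf --- is not what the paper does and is likely unachievable under $\delta_{max}\le 3$: with a single linearly ordered identifier feature $d_0$ (the only option compatible with $\delta_{max}\le 3$), a threshold test on $d_0$ \emph{can} profitably slice the positive bundle together with interleaved negatives, and nothing local prevents an optimal tree from doing so.

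The paper resolves this not by forbidding identifier tests but by \emph{absorbing} them through replication and counting. It builds $\eta$ disjoint blocks, each a full copy of the basic instance (with $m$ positives and $m$ negatives, $d_0$-values interleaved) shifted into its own $d_0$-interval, so any single $d_0$-test intersects at most one block. A tree with $s$ (resp.\ at most $2^d$) test nodes thus leaves at least $\eta - s$ (resp.\ $\eta - 2^d$) blocks with all their positives in a single leaf; on any such intact block your path-based covering argument applies verbatim. Setting $t=\ell\eta$ with $\ell=m-p$ and choosing $\eta = s(\ell+2)$ (resp.\ $2^d(\ell+2)$), if every intact block had more than $\ell$ of its negatives at its leaf the outlier count would exceed $(\eta-s)(\ell+1)>t$; hence some intact block has at most $\ell$ such negatives, and the vertex features on the path to its leaf give the \textsc{PVC} solution. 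So the missing ingredient is this block-replication plus pigeonhole step, not a cleverer identifier gadget.
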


We complement our W[1]-hardness result with \FPT algorithms for \DTSO and \DTDO by including $t$ as an additional parameter. We build on the algorithms provided by Eiben et al.~\cite{eibenLargeDomain} for \DTS and \DTD parameterized $s+\delta_{max}$ and $d+\delta_{max}$, respectively.

\begin{restatable}{theorem}{FPTT}\label{th:FPT}
\textsc{DTDO} and \DTSO are \FPT parameterized by $d+\delta_{max}+t$ and $s+\delta_{max}+t$, respectively. 
\end{restatable}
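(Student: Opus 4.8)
The plan is to graft an outlier-handling mechanism onto the branching algorithms of Eiben et al.~\cite{eibenLargeDomain} for \DTS and \DTD. Recall that these algorithms, at their core, repeatedly locate a \emph{conflicting pair}: a positive example $e^+$ and a negative example $e^-$ that currently reach the same leaf of the partially constructed \DT. Since any positive and any negative example differ in at most $\delta_{max}$ features, so do $e^+$ and $e^-$, and any correct \DT that sends them to distinct leaves must test, on the separating root-to-leaf path, one of these (at most $\delta_{max}$) features. This yields a branching step whose branching factor is bounded by a computable function $g$ of the parameter ($s+\delta_{max}$, resp.\ $d+\delta_{max}$)---crucially independent of the domain size, which is exactly the content of~\cite{eibenLargeDomain}, where the threshold used for each feature is selected from a bounded set of candidates.

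For \DTSO and \DTDO I would augment each such step with two additional branches: one declaring $e^+$ an outlier and one declaring $e^-$ an outlier; in each, the chosen example is deleted and the remaining outlier budget $t$ is decremented. I also let the base case absorb outliers: a leaf reached by a set $S$ of examples may be finalized with the majority label of $S$, charging the minority of $S$ against the budget. Correctness of the branching follows because it is exhaustive: for any solution given by a \DT $T$ and an outlier set $O$ with $|O|\le t$, a conflicting pair reaching a common leaf of $T$ forces at least one of $e^+,e^-$ into $O$ (a leaf carries a single label and cannot classify both correctly); otherwise $T$ separates them via a feature in which they differ. Moreover, deleting examples can only decrease $\delta_{max}$, so the structural property underpinning the branching is preserved throughout the recursion.

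The time analysis is then routine. Each feature-test branch adds exactly one test node to the partial \DT, and each outlier branch decrements $t$. For \DTSO the tree has at most $s$ test nodes, so any root-to-leaf path of the search tree has at most $s$ feature-test branches and at most $t$ outlier branches; the search tree thus has at most $(g(s+\delta_{max})+2)^{\,s+t}$ leaves, yielding an \FPT algorithm parameterized by $s+\delta_{max}+t$. For \DTDO the only change is that a \DT of depth at most $d$ has at most $2^{d}-1$ test nodes, so paths carry at most $2^{d}-1+t$ branching steps and the search tree has at most $(g(d+\delta_{max})+2)^{\,2^{d}+t}$ leaves, still bounded by a function of $d+\delta_{max}+t$. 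Each leaf is checked in polynomial time.

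The main obstacle is not this accounting but the faithful integration of the outlier branches with the specific machinery of~\cite{eibenLargeDomain}. In particular, their bound on the number of candidate thresholds per feature---the step that removes the dependence on $D_{max}$---is derived in the outlier-free setting, and I must verify that the enumerated set of ``relevant'' thresholds still contains one realizing an optimal tree once up to $t$ examples may be disregarded. A secondary point requiring care is the bookkeeping of the global outlier budget: since outliers are shared rather than split between the two sides of a test, I would thread a single global counter through the recursion (equivalently, fold $t$ into the search-tree measure) so that the cap of $t$ outlier branches applies to the whole computation path and not merely within one subtree.
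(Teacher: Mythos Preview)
Your core device---adding two ``declare this example an outlier'' branches to the $\delta_{max}$-way branching on a conflicting pair $(\mathsf{p},\mathsf{n})$---is exactly what the paper does, and the resulting $(\delta_{max}+2)^{k+t}$ bound is the same. The organization differs, however. The paper confines the outlier-augmented branching to the \emph{support-set enumeration} stage: it picks $\mathsf{p}\in E^+\setminus O$ and $\mathsf{n}\in E^-\setminus O$ not yet separated by the current feature set $S$, and branches into the at most $\delta_{max}$ features on which they differ plus the two outlier branches, yielding all minimal support sets with at most $t$ outliers. After that it runs the pattern/branching-set pipeline of Eiben et al.\ essentially unchanged, replacing only the final ``does $T$ classify $E$?'' test by ``is $|O(T,E)|\le t$?''. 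You instead thread the outlier branches through the tree construction itself. This is a reasonable alternative, but your description of Eiben et al.\ as ``repeatedly locating a conflicting pair at a leaf of a partially constructed \DT'' is not how their algorithm is structured; the conflicting-pair branching there lives in support-set enumeration, which is decoupled from the subsequent pattern and threshold (branching-set) machinery. The paper's modular split is precisely what lets it claim that the threshold-selection step carries over with only the final acceptance check modified---so the obstacle you flag about relevant thresholds under outliers is handled by reuse rather than by a fresh argument---and it also dissolves your global-budget bookkeeping concern, since all outlier branching happens in a single enumeration phase before any tree is built.
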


Next, we consider the kernelization complexity of \DTSO and \DTDO. Since \DTD and \DTS are the special cases of \DTDO and \DTSO when $t=0$, we prove negative kernelization results for \DTS and \DTD, which imply the same for \DTSO and \DTDO. We first observe that the reduction provided by Ordyniak and Szeider~\cite{ordyniak} from \textsc{Hitting Set} (\HS) to \DTS and \DTD to prove W[1]-hardness is a polynomial parameter transformation. Since \HS is unlikely to admit a polynomial compression parameterized by $k+|\mathcal{U}|$ (Proposition~\ref{P:HS}), we have the following theorem.

\begin{restatable}{observation}{HSIncompressible}\label{th:HSIncompressible}
\DTS and \DTD parameterized by $s+|F|+D_{max}$ and  $d+|F|+D_{max}$, respectively, do not admit a polynomial compression even when $D_{max} =2$, unless \NPoly.
\end{restatable}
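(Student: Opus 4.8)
The plan is to obtain this incompressibility result by transporting the known absence of a polynomial compression for \HS through a polynomial parameter transformation (PPT). The principle I would invoke is the standard transitivity of compressions under PPTs: if a parameterized problem $A$ admits a PPT to a parameterized problem $B$ and $B$ has a polynomial compression into some language $L$, then precomposing the compression with the transformation yields a polynomial compression of $A$ into $L$. Hence, to rule out a polynomial compression for \DTS and \DTD it suffices to exhibit a PPT \emph{from} \HS parameterized by $k + |\mathcal{U}|$ \emph{to} \DTS parameterized by $s + |F| + D_{max}$ and to \DTD parameterized by $d + |F| + D_{max}$, respectively, because, by \cref{P:HS}, \HS parameterized by $k + |\mathcal{U}|$ admits no polynomial compression unless \NPoly.

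The transformation I would use is exactly the reduction of Ordyniak and Szeider~\cite{ordyniak} from \HS to \DTS and \DTD, which is already known to be correct and polynomial-time, since it was used there to establish W[1]-hardness: the constructed $\mathsf{CI}$ admits a $\mathsf{DT}$ of the prescribed size (resp. depth) if and only if the input \HS instance is a yes-instance. The remaining, and only substantive, task is to certify that this map is a \emph{polynomial} parameter transformation rather than merely an FPT reduction. Concretely, I would reinspect their gadget and bound each target parameter by a polynomial in $k + |\mathcal{U}|$: the number of features $|F|$, the target size $s$ (resp. depth $d$), and the domain bound $D_{max}$. Because their construction produces \emph{binary} instances, the domain coordinate is constant with $D_{max} = 2$; this simultaneously discharges that coordinate of the parameter bound and yields the lower bound already in the restricted regime $D_{max} = 2$ claimed in the statement.

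Granting these polynomial bounds, the proof closes by contradiction. A polynomial compression for \DTS parameterized by $s + |F| + D_{max}$ would, composed with the PPT, produce a polynomial compression for \HS parameterized by $k + |\mathcal{U}|$, contradicting \cref{P:HS} unless \NPoly; running the same argument with depth in place of size settles \DTD.

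The step I expect to require genuine care is the polynomial accounting in the second paragraph. W[1]-hardness guarantees only that the target parameter is bounded by \emph{some} computable function of the source parameter, whereas a PPT demands a polynomial bound, so the obstacle is to verify that the Ordyniak--Szeider construction keeps $|F|$ and the target size $s$ (resp. depth $d$) simultaneously polynomial in $k + |\mathcal{U}|$, with no hidden super-polynomial blow-up in the feature set or the tree. Once this bookkeeping is confirmed, the observation follows immediately from the transfer principle.
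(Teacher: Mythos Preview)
Your proposal is correct and matches the paper's approach exactly: the paper likewise observes that the Ordyniak--Szeider reduction is a polynomial parameter transformation (with $|F|=|U|$, $s=d=k$, and $D_{max}=2$) and combines this with \cref{P:HS} to conclude incompressibility. The polynomial accounting you flag as the delicate step is in fact immediate from the description of the reduction given in the preliminaries.
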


Since \HS admits a kernel of size at most $k^{\mathcal{O}(\Delta)}$~\cite{bookKernelization} where $\Delta$ is the \textit{arity} of the \HS instance, it becomes interesting to determine the kernelization complexity of \DTSO and \DTDO when $\delta_{max}$ is a fixed constant (as $\delta_{max}=\Delta$ in the reduction). Towards this, we establish that \DTSO and \DTDO admit trivial polynomial kernels parameterized by $D_{max}+|F|$ when $\delta_{max}$ is a fixed constant. 

\begin{restatable}{theorem}{TrivialKernel}\label{th:trivial}
\DTS and \DTD parameterized by $D_{max}+|F|$ admit a trivial polynomial kernel when $\delta_{max}$ is a constant.
\end{restatable}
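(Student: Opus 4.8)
The plan is to show that once we discard redundant (duplicate) examples, the \CI is \emph{already} small: its number of distinct examples is polynomial in $D_{max}+|F|$ whenever $\delta_{max}$ is a fixed constant, so outputting the deduplicated instance is itself the desired ``trivial'' kernel. First I would dispose of the degenerate cases: if every example of $E$ carries the same label (in particular, if $E$ has no positive example or no negative example), then a single-leaf \DT of size/depth $0$ classifies $E$ correctly, and we immediately return a trivial YES-instance. Hence from now on I assume $E$ contains at least one positive example and at least one negative example.

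The heart of the argument is a counting bound obtained by \emph{anchoring} at a single example. Fix one positive example $p^\star \in E$. By the definition of $\delta_{max}$, every negative example $n \in E$ differs from $p^\star$ in at most $\delta_{max}$ features. Thus $n$ is completely specified by (i) the set $S \subseteq F$ of features on which it disagrees with $p^\star$, where $|S|\le \delta_{max}$, and (ii) the values it takes on $S$, each drawn from a domain of size at most $D_{max}$. The number of such choices, hence the number of \emph{distinct} negative examples, is therefore at most
\[
  \sum_{i=0}^{\delta_{max}} \binom{|F|}{i}\, D_{max}^{\,i} \;=\; \mathcal{O}\!\big((|F|\cdot D_{max})^{\delta_{max}}\big),
\]
which is polynomial in $|F|+D_{max}$ once $\delta_{max}$ is a constant. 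Anchoring symmetrically at a fixed negative example yields the same polynomial bound on the number of distinct positive examples, so the number of distinct examples in $E$ is polynomial in $|F|+D_{max}$.

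Next I would argue that deleting duplicate examples is safe for \DTS and \DTD (the outlier-free, $t=0$ regime). A \DT solving either problem must classify \emph{every} example of $E$ correctly; two examples with identical feature vectors and identical labels impose exactly the same constraint, so retaining a single representative of each distinct (vector, label) pair changes neither the set of feasible trees nor the minimum achievable size or depth. (If some feature vector occurs with both labels, then no \DT can classify $E$ correctly, which is detected in polynomial time and returned as a trivial NO-instance.) After deduplication $E$ has polynomially many examples, each encoded by $|F|$ values in $\{1,\dots,D_{max}\}$, so the whole instance---together with the target $s$ (resp.\ $d$), which we may assume does not exceed the number of distinct examples, as otherwise a trivial \DT already suffices---has size polynomial in $|F|+D_{max}$, giving the claimed kernel.

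The reduction is genuinely ``trivial'' in that it performs no clever preprocessing beyond removing duplicates, so there is no real obstacle; the only point that needs care is the counting step. Here the crucial feature to exploit is that $\delta_{max}$ bounds the disagreement of \emph{every} positive--negative pair uniformly, which is why a single anchor suffices to confine all oppositely-labeled examples to the set of vectors differing from it in at most $\delta_{max}$ coordinates. I would double-check the exact domain convention (values in $\{1,\dots,D_{max}\}$ versus $\{0,\dots,D_{max}\}$) only to pin down the constant in the per-feature value count; it does not affect the polynomial nature of the bound.
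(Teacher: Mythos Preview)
Your proposal is correct and follows essentially the same approach as the paper: fix one positive (resp., negative) example as an anchor and observe that every oppositely-labeled example is determined by the at most $\delta_{max}$ coordinates on which it disagrees with the anchor together with the values there, yielding the $\mathcal{O}\big((|F|\cdot D_{max})^{\delta_{max}}\big)$ bound on the number of examples. Your extra care about deduplication and contradictory labels is unnecessary under the paper's conventions (examples are functions and $E^+,E^-$ are disjoint sets), but it does no harm.
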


Next, we establish the incompressibility  of \DTD parameterized by $d+|F|$ even when  $\delta_{max}$ is a fixed constant. To this end, we provide a non-trivial AND-composition for \DTD parameterized by $d+|F|$ such that $\delta_{max} \leq 3$. 
\begin{restatable}{theorem}{DTDIncompressible}\label{th:dtdIncompressible}
\DTD parameterized by $d+|F|$ does not admit a  polynomial compression even when $\delta_{max}\leq 3$, unless \NPoly.
\end{restatable}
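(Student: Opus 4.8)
The plan is to rule out polynomial compression via an AND-cross-composition, so that the conclusion follows from the standard cross-composition framework: if some NP-hard language $L$ AND-cross-composes into a parameterized problem $Q$, then $Q$ has no polynomial compression unless \NPoly. Concretely, I must give a polynomial-time algorithm that, on input of $t$ instances of $L$ that are equivalent under a fixed polynomial equivalence relation, outputs a single \DTD instance whose parameter $d+|F|$ is bounded by a polynomial in $(\max_i |x_i|)+\log t$ and which is a \textsc{yes}-instance iff all $t$ inputs are. For $L$ I would take the restriction of \DTD with $\delta_{max}\le 2$ --- equivalently, through the Ordyniak--Szeider reduction, \VC or arity-$2$ \HS --- which is NP-hard (it underlies \cref{th:Whard}). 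The polynomial equivalence relation groups instances by their number of features, their domain, and the target depth $d$, so that within a class all inputs $E_1,\dots,E_t$ share a common feature set $F$ and a common depth bound $d$.

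I would then build the composed instance $E^\star$ over $F$ together with a single linearly ordered index feature $a$ with domain $\{1,\dots,t\}$, and set the target depth to $d^\star=d+\lceil\log t\rceil$. Each example of $E_i$ is copied verbatim on $F$ and assigned index $i$ on $a$; in particular each $E_i$ keeps its own base example, so that no two instances are silently merged. Since $a$ contributes at most one coordinate of disagreement between any two examples and the source has $\delta_{max}\le 2$, every oppositely labelled pair differs in at most $3$ features, giving $\delta_{max}\le 3$. The key extra ingredient is a family of \emph{conflict examples}, placed at the common background pattern but with interleaved index values and alternating labels, whose sole role is to force any correct tree to fully resolve the index. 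Because $|F|$ grows only by $O(1)$ and $d^\star$ only by $\lceil\log t\rceil$, the parameter $d^\star+|F|$ stays polynomial in the source size plus $\log t$, as the framework requires.

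For the forward direction, if every $E_i$ admits a depth-$d$ tree, I first branch on $a$ by a balanced search of depth $\lceil\log t\rceil$, separating the examples according to their instance, and then hang the depth-$d$ tree of $E_i$ under the corresponding leaf; this classifies $E^\star$ with depth exactly $d^\star$. The delicate direction is soundness: from a tree of depth $\le d^\star$ for $E^\star$ I must extract a depth-$\le d$ tree for each $E_i$. The conflict examples are designed so that, along the portion of the tree reached by background examples, the index cannot be left unresolved --- separating the interleaved conflict examples already costs depth $\lceil\log t\rceil$ --- so any path reaching the base example of $E_i$ must have spent $\lceil\log t\rceil$ tests pinning $a$ to $i$, leaving at most $d$ tests on $F$. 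Restricting the tree to the index value $i$ and deleting the (now constant) tests on $a$ then yields a depth-$\le d$ tree for $E_i$.

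The main obstacle is exactly this forcing-and-accounting step. I must rule out the shortcut in which the tree declines to resolve the index and instead classifies the merged (``union'') instance with a shallow tree, since such a shortcut would make $E^\star$ a \textsc{yes}-instance even when some $E_i$ is a \textsc{no}-instance, destroying the AND semantics. Blocking it requires engineering the conflict examples so that narrowing the index to any set $S$ of instances before testing features of $F$ forces those feature tests to simultaneously solve \emph{all} of $\{E_j : j\in S\}$, making every deviation from the intended route-then-solve strategy at least as deep; quantifying this trade-off tightly --- while respecting both the depth budget $d+\lceil\log t\rceil$ and the constraint $\delta_{max}\le 3$ --- is where the real work lies. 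Once soundness is established, combining this AND-cross-composition with the NP-hardness of the source gives the claimed impossibility of polynomial compression unless \NPoly.
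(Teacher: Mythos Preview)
Your high-level plan (AND-cross-composition from \textsc{Vertex Cover} via one added index feature, with conflict examples at the background pattern to force index resolution) matches the paper's. The gap is exactly where you flag it, but the fix you sketch does not close it. Alternating-label conflict examples at the all-zero pattern do force the index to be \emph{fully} resolved along the path to every base example, since base and conflict examples agree on all of $F$ and only index tests can separate them. What they do not force is \emph{balanced} resolution. Nothing stops the tree from isolating one instance $E_1$ after only two index tests, leaving a budget of $d+\lceil\log t\rceil-2$ feature tests for $E_1$, while spending the remaining index budget on the other branch to separate the (say, trivially solvable) rest. For $t\geq 5$ this lets a \textsc{no}-instance $E_1$ of optimal depth $d+1$ live inside a \textsc{yes} composed instance, so the AND semantics fails. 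Your intuition that ``narrowing to a set $S$ forces solving all of $\{E_j:j\in S\}$'' does not block this: the offending tree \emph{does} narrow every index to a singleton, just not at uniform depth across branches.

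The paper's missing ingredient is to make the conflict blocks themselves \emph{hard}. Odd-indexed blocks are not single examples but full copies of one fixed graph with vertex-cover number exactly $k$, encoded with the labels swapped (so its all-zero example is negative) and placed on a disjoint second copy of the vertex features (so $|F|=2n+1$). A normalisation lemma, using that in the \HS encoding every vertex test has a leaf as its right child, shows one may assume all index tests precede all vertex tests. The index-only top part $T'$ then has one leaf per block by the alternating-label argument; crucially, the deepest leaf of $T'$ has as sibling a \emph{consecutive} block, one of which is odd and therefore still needs depth $k$ below it. Hence $T'$ cannot exceed depth $\log_2(2N)$ anywhere, so it is a perfect binary tree and every instance block inherits exactly a depth-$k$ budget. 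The hardness of the interleaved padding blocks, not the alternating labels alone, is what enforces balance and makes soundness go through.
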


Let $|E|$ denote the number of examples in $E$. Notice that, possibly, $|F|>|E|$. Since, \HS and \textsc{Set Cover} are dual problems, we observe that this duality can be used alongwith the reduction provided by Ordyniak and Szeider~\cite{ordyniak} (from \HS to \DTD and \DTS) to get a polynomial parameter transformation from \textsc{Set Cover} to \DTS and \DTD, which in turn imply their incompressibility parameterized by $|E|$. 
\begin{restatable}{observation}{nIncompressible}\label{th:nIncompressible}
Let $|E|$ be the number of examples in $E$. \DTD and \DTS parameterized by $|E|+D_{max}$ do not admit a  polynomial compression even when $D_{max}\leq 2$, unless \NPoly.
\end{restatable}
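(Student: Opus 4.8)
The plan is to establish a polynomial parameter transformation (PPT) from \SC parameterized by the universe size to \DTD and \DTS parameterized by $|E|+D_{max}$, and then invoke a known incompressibility result for \SC. Recall that \HS and \SC are dual: an instance of \HS with universe $\mathcal{U}$ and family $\mathcal{F}$ is equivalent to a \SC instance whose ground set is $\mathcal{F}$ and whose sets are indexed by $\mathcal{U}$ (element $u$ gives the set of all $F_i \in \mathcal{F}$ containing $u$). Under this duality, the roles of the universe size and the number of sets are swapped. It is known (and used implicitly via Proposition~\ref{P:HS}) that \HS does not admit a polynomial compression parameterized by $k+|\mathcal{U}|$. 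Dually, \SC should not admit a polynomial compression parameterized by the universe size (the number of \emph{elements to be covered}), which is the parameter I want to control.

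The key observation is that in the reduction of Ordyniak and Szeider from \HS to \DTD and \DTS, the number of examples $|E|$ in the produced \CI is governed by the \emph{universe} of the \HS instance, not by the number of sets in the family. Concretely, each element of the \HS universe contributes a bounded number of examples (and $D_{max}\le 2$ since the instances are binary), while the features $F$ correspond to the sets of the family; this is exactly why $|F|$ can be much larger than $|E|$, as the excerpt remarks. So first I would recall that reduction and verify that it runs in polynomial time and produces a classification instance in which $|E|$ is polynomially bounded in $|\mathcal{U}|$, with $D_{max}\le 2$. Then, passing through the \HS/\SC duality, I reinterpret the same reduction as a PPT that starts from a \SC instance, takes its number-of-elements parameter, and yields a \DTD (resp.\ \DTS) instance with $|E|+D_{max}$ polynomially bounded in that parameter.

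The final step is to chain the compressions. A PPT from a problem $A$ (with no polynomial compression) to a problem $B$ implies $B$ has no polynomial compression, since a polynomial compression for $B$ composed with the PPT would yield one for $A$. Applying this with $A = \SC$ parameterized by the universe size and $B = \DTD$ (resp.\ \DTS) parameterized by $|E|+D_{max}$, we conclude that neither problem admits a polynomial compression, even restricted to $D_{max}\le 2$, unless \NPoly. Because \DTD and \DTS are the $t=0$ special cases of \DTDO and \DTSO, the same lower bound transfers to the outlier versions.

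The main obstacle I anticipate is bookkeeping rather than conceptual depth: one must verify carefully that the Ordyniak--Szeider reduction is genuinely a \emph{polynomial parameter} transformation for the swapped parameter, i.e.\ that the number of examples it creates is controlled solely by $|\mathcal{U}|$ (in \HS terms, the number of elements being covered after dualization) and does not secretly depend on the family size. I would also need to state the dual incompressibility of \SC cleanly—either by citing it directly or by observing it follows from Proposition~\ref{P:HS} under duality—so that the PPT argument closes. Given that the excerpt already frames this as an ``observation'' leveraging duality and the existing reduction, I expect the proof to be short once these two checks are in place.
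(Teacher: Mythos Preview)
Your overall approach is exactly the paper's: use the \HS/\SC duality together with the Ordyniak--Szeider reduction to obtain a polynomial parameter transformation from \SC (parameterized by $k+|U|$) to \DTD/\DTS (parameterized by $|E|+D_{max}$), and then invoke Proposition~\ref{P:HS}. The paper even spells out the resulting \SC-to-\DT reduction explicitly (Observation~\ref{L:SC}): features correspond to the sets in $\mathcal{F}$, there is one negative example $n_u$ per element $u\in U$ plus a single positive dummy, so $|E|=|U|+1$ and $D_{max}=2$.

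However, your bookkeeping description of the Ordyniak--Szeider \HS reduction is inverted. In that reduction, features correspond to elements of the \HS universe $U$ and examples correspond to the sets in $\mathcal{F}$ (one negative example $n_X$ per $X\in\mathcal{F}$, plus one positive dummy), so $|E|=|\mathcal{F}|+1$, not $|U|+1$, and the features---not the examples---are indexed by the universe. Your sentence ``each element of the \HS universe contributes a bounded number of examples \dots\ while the features $F$ correspond to the sets of the family'' has it backwards. This is precisely why the duality step is needed: when you dualize an \SC instance $(\mathcal{F},U,k)$ to an \HS instance, the \HS \emph{family} has size $|U|$, and it is the family size that controls $|E|$ after the reduction---giving $|E|=|U|+1$ as desired. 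Once you fix this swap, the argument goes through exactly as in the paper.
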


    

\section{Preliminaries}\label{S:prelim}
For $\ell\in \mathbb{N}$, let $[\ell]= \{ 1,\ldots, \ell\}$. For the graph theoretic notations not defined explicitly here, we refer to~\cite{diestel}. 

\medskip
\noindent\textbf{Classification Problems.}
An example $e$ is a function $e~:~F(e) \rightarrow D$ defined over a finite set $F(e)$ of \textit{features} and a (possibly infinite) linearly ordered \textit{domain} $D\subset \mathbb{Z}$. 
A \textit{classification instance (CI)} $E= E^+ \bigcup E^-$ is the disjoint union of two sets of \textit{positive examples} $E^+$ and \textit{negative exapmles} $E^-$, defined over the same set of features, i.e., for $e_1, e_2 \in E$, $F(e_1) = F(e_2)$. Here,  $F(E) = F(e)$ for some $e\in E$. A set of examples $X \in E$ is \textit{uniform} if $X\in E^+$ or $X\in E^-$; otherwise, $X$ is \textit{non-uniform}.  When it is clear from the context, we denote $F(E)$ by $F$.
For two examples $e_1, e_2\in E$ and some feature $f\in F$, if $f(e_1) = f(e_2)$, then we say that $e_1$ and $e_2$ \textit{agree} on $f$, else, we  say that $e_1$ and $e_2$ \textit{disagree} on $f$. A subset $S \subseteq F$ is said to be a \textit{support set} of $E$ if any two examples $e^+ \in E^+$ and $e^- \in E^-$ disagree in at least one feature of $S$. It is \NP-hard to compute a support set of minimum size~\cite{ibaraki2011partially}. 

For two examples $e,e'\in E$, let $\lambda(e,e')$ denote the set of features where $e$ and $e'$ disagree. Moreover, let $\lambda_{max}(E) = \max_{e^+\in E^+ \wedge e^- \in E^-} |\lambda(e^+,e^-)|$ denote the maximum number of features any two non-uniform examples disagree on. 

For a feature $f\in F$, let $D_E(f)$ denote the set of domain values appearing in any example of $E$, i.e., $D_E(f) = \set{e(f)~|~e \in E}$. Moreover, let $D_{max}$ denote the maximum size of $D_E(f)$ over all features of $E$, i.e., $D_{max}= \max_{f\in F} |D_E(f)|$. If $D_{max} = 2$, then the classification instance is said to be \textit{Boolean}. 
Let $E[\alpha]$ denote the set of examples that agree with the assignment $\alpha~:~ F' \rightarrow D$, where $F'\subseteq F(E)$, i.e., $E[\alpha] = \set{e~|~ e(f) = \alpha(f) \wedge f\in F'}$. 

\medskip
\noindent\textbf{Decision Tree.}
A \textit{decision tree} $(\mathsf{DT})$ is a rooted tree $T$, with vertex set $V(T)$ and arc set $A(T)$, where each non-leaf node $v\in V(T)$ is labeled with a \textit{feature} $f(v)$ and an integer \textit{threshold} $\lambda(v)$, each non-leaf node has exactly two outgoing arcs, a \textit{left arc} and a \textit{right arc}, and each leaf is either a \textit{positive leaf} or a  \textit{negative leaf}. Let $F(T) = \set{f(v)~|~v\in V(T)}$. A non-leaf node of $T$ is referred to as a \textit{test node}. For $v\in V(T)$, let $T_v$ denote the subtree of $T$ rooted at $v$.

Consider a \CI $E$ and \DT $T$ with $F(T) \subseteq F(E)$. For each node $v\in T$, let $T_E(v)$ be the set of all examples $e\in E$ such that for each left (resp., right) arc $uw$ on the unique path from the root of $T$ to $v$, we have $e(F(u)) \leq \lambda(u)$ (resp., $e(F(u)) > \lambda(u)$). $T$ \textit{correctly classifies} an example $e\in E$ if $e$ is a positive (resp. negative) example and $e\in T_E(v)$ for a poistive (resp., negative) leaf $v$. Similarly, $T$ \textit{correctly classifies} an instance $E$ if $T$ correctly classifies every example of $E$. Here, we also say that $T$ is a \DT for $E$. The \textit{size} of $T$, denoted by $|T|$, is the number of test nodes in $T$. Similarly, the \textit{depth} of $T$, denoted by $dep(T)$, is the maximum number of test nodes in any root-to-leaf path of $T$. \DTS (resp., \DTD) is now the  problem of deciding whether given \CI $E$ and a natural number $s$ (resp., $d$), is there a \DT of size at most $s$ (resp., depth at most $d$) that classifies $E$. More formally, we have the following decision problems.

\medskip

\Pb{\DTS}{A classification instance $E$, and an integer $s \in \mathbb{N}$.}{Question}{Is there a \DT $T$ of size at most $s$ that classifies $E$?}


\Pb{\DTD}{A classification instance $E$, and an integer $d \in \mathbb{N}$.}{Question}{Is there a \DT $T$ of depth at most $d$ that classifies $E$?}

\medskip
\noindent \textbf{Decision Trees with Outliers.}
Let $E$ be a \CI, and $T$ be a \DT corresponding to $E$, which does not necessarily classifies $E$. Let $v$ be a positive (resp., negative) leaf of $T$. Then, we say that an example $e \in E$ is an \textit{outlier for $T$} if $e \in E^+$ (resp., $e\in E^-$) and $e\in T_E(v)$ for some negative (resp., positive leaf) $v$. We just say that $e$ is an outlier when it is clear from the context. Let $O(T,E)$ denote the set of outliers in $E$ for $T$.  Then, $|O(T,E)|$ is the number of outliers for $T$ in $E$.
\DTSO (resp., \DTDO) is now the  problem of deciding whether given \CI $E$ and natural number $s$ and $t$ (resp., $d$ and $t$), is there a \DT of size at most $s$ (resp., depth at most $d$) such that $|O(T,E)|\leq t$.
Consequently, we have the following decision problems.



\Pb{\DTSO}{A classification instance $E$, and integers $s, t \in \mathbb{N}$.}{Question}{Is there a \DT $T$ of size at most $s$ that has at most $t$ outliers for $E$?}


\Pb{\DTDO}{A classification instance $E$, and integers $d, t \in \mathbb{N}$.}{Question}{Is there a \DT $T$ of depth at most $d$ that has at most $t$ outliers for $E$?}

\noindent \textbf{Parameterized Complexity.} 
In the framework of parameterized complexity, each problem instance is associated with a non-negative integer, called a \textit{parameter}. A parametrized problem $\Pi$ is \textit{fixed-parameter tractable} ($\mathsf{FPT}$) if there is an algorithm that, given an instance $(I,k)$ of $\Pi$, solves it in time $f(k)\cdot |I|^{\mathcal{O}(1)}$ for some computable function $f(\cdot)$. Central to parameterized complexity is the W-hierarchy of complexity classes:
$
\mathsf{FPT} \subseteq \mathsf{W[1]}  \subseteq \mathsf{W[2]} \subseteq \ldots \subseteq \mathsf{XP}.
$ Specifically, \FPT $\neq$ W[1], unless \ETH fails.

Two instances $I$ and $I'$ are \textit{equivalent} when $I$ is a Yes-instance iff $I'$ is a Yes-instance. A \textit{compression} of a parameterized problem $\Pi_1$ into a (possibly non-parameterized) problem $\Pi_2$ is a polynomial-time algorithm that maps each instance $(I,k)$ of $\Pi_1$ to an equivalent instance $I'$ of $\Pi_2$ such that size of $I'$ is bounded by $g(k)$ for some computable function $g(\cdot)$. If $g(\cdot)$ is polynomial, then the problem is said to admit a \textit{polynomial compression}.
A \textit{kernelization algorithm} is a compression where $\Pi_1 = \Pi_2$. Here, the output instance is called a \textit{kernel}.  
Let $\Pi_1$ and $\Pi_2$ be two parameterized problems. A \textit{polynomial parameter transformation} from $\Pi_1$ to $\Pi_2$ is a polynomial-time algorithm that, given an instance $(I,k)$ of $\Pi_1$, generates an equivalent instance $(I',k')$ of $\Pi_2$
such that $k' \leq p(k)$, for some polynomial $p(\cdot)$. It is well-known that if $\Pi_1$ does not admit a polynomial compression, then $\Pi_2$ does not admit a polynomial compression~\cite{bookParameterized}. An \textit{AND-composition} from a problem $P$ to a parameterized problem $Q$ is an algorithm that takes as input $N$ instances $I_1,\ldots,I_N$ of $P$, and in time polynomial in $\sum_{j\in [N]} |I_j|$, outputs an instance $(\mathcal{I},k)$ of $Q$ such that: (1) $(\mathcal{I},k)$ is a Yes-instance iff  $I_j$ is a Yes-instance for each $j\in [N]$, and (2) $k$ is bounded by a polynomial in $\max_{j\in [N]} |I_j| + \log N$. It is well known that if $P$ is NP-hard, then $Q$ does not admit a polynomial compression parameterized by $k$, unless \NPoly~\cite{bookKernelization}.  We refer to the books~\cite{bookParameterized,bookKernelization} for  details on parameterized complexity.

\medskip
\noindent\textbf{Hitting Set, Set Cover, and Decision Trees.}
Given a family of sets $\mathcal{F}$ over some universe $U$ and an integer $k$, the \textsc{Hitting Set} (\HS) problem asks whether $\mathcal{F}$ has a \textit{hitting set} of size $k$, i.e., a subset $H$ of $U$ of size at most $k$ such that $X\cap H \neq \emptyset$ for every $X\in \mathcal{F}$. The \textit{maximum} \textit{arity} $\Delta$ of a \HS instance is the size of a largest set in $\mathcal{F}$. Ordyniak and Szeider~\cite{ordyniak} provided the following reduction from \HS to \DTS and \DTD:  For an instance $\mathcal{I} = (\mathcal{F}, U,k)$ of \HS, let $E(\mathcal{I})$ be the \CI that has a (Boolean) feature for every element in $U$, one positive example $p$ with $p(u) = 0$ for every $u\in U$; and one negative example $n_X$ for every $X\in \mathcal{F}$ such that $n_X(u)=1$ for every $u\in X$ and $n_X(u)=0$, otherwise. It is easy to see here that $\delta_{max}$ of $E(\mathcal{I)}$ is the same as the $\Delta$ (arity) of $\mathcal{I}$ and $D_{max} = 2$. 
Similarly to above construction, we define $\overline{E}(\mathcal{I})$ by turning each positive example to negative example and each negative example to positive, i.e., $p \in \overline{E}^-(\mathcal{I})$ and $n_X \in \overline{E}^+(\mathcal{I})$, for $X\in \mathcal{F}$. Observe that $E(\mathcal{I})$ admits a \DT of size $s$ (resp., depth $d$) iff $\overline{E}(\mathcal{I})$ admits a \DT of size $s$ (resp., depth $d$). Ordyniak and Szeider~\cite{ordyniak} proved the following result.

\begin{proposition}[\cite{ordyniak}]
    $\mathcal{I}$ has a hitting set of size at most $k$ iff $E(\mathcal{I})$ (resp. $\overline{E}(\mathcal{I})$) admits a \DT of depth (resp., size) at most $k$.
\end{proposition}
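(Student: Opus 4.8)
The plan is to prove both equivalences through a single structural idea, exploiting that the all-zeros example $p$ must be separated from each ``set example'' $n_X$. I would argue the depth claim for $E(\mathcal{I})$ in full; the size claim for $\overline{E}(\mathcal{I})$ then follows by the identical argument with the positive and negative leaf labels swapped, which is moreover consistent with the already-noted fact that $E(\mathcal{I})$ and $\overline{E}(\mathcal{I})$ have the same optimal depth and size. Before starting, I would record a normalization observation: since every feature of $E(\mathcal{I})$ takes values in $\{0,1\}$, the only threshold at which a test node on a feature $u\in U$ actually separates two examples is $\lambda=0$ (value $0$ branches left, value $1$ branches right); any other integer threshold sends all examples to the same child. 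Hence, whenever two examples take different branches at a test node $w$, we must have $\lambda(w)=0$, and one of them has value $1$ at $f(w)$ while the other has value $0$.

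For the forward direction, given a hitting set $H=\{h_1,\dots,h_r\}$ with $r\le k$, I would build a caterpillar \DT consisting of a path $v_1,\dots,v_r$, where $v_i$ tests feature $h_i$ with threshold $0$, the right child of every $v_i$ is a negative leaf, the left child of $v_i$ is $v_{i+1}$ for $i<r$, and the left child of $v_r$ is a positive leaf. I would then check correctness: since $p(h_i)=0$ for all $i$, the example $p$ travels left throughout and reaches the positive leaf; and for each $X\in\mathcal{F}$, the hitting property yields some $h_i\in X$, so $n_X(h_i)=1$ and $n_X$ branches right at the first such $v_i$ into a negative leaf. This tree has depth $r\le k$, giving the depth direction; the mirror tree over $\overline{E}(\mathcal{I})$ (leaf labels swapped) has size $r\le k$, giving the size direction.

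For the backward direction, given a \DT $T$ of depth at most $k$ that classifies $E(\mathcal{I})$, I would trace the root-to-leaf path $P$ followed by $p$ and set $H:=\{f(w): w \text{ a test node on } P\}$, so that $|H|$ is at most the number of test nodes on $P$, which is at most the depth, hence at most $k$. To see that $H$ is a hitting set, fix any $X\in\mathcal{F}$: since $p$ and $n_X$ carry opposite labels, $T$ sends them to leaves of opposite sign, so their root-to-leaf paths must split at some test node $w$, and this $w$ necessarily lies on $P$. By the normalization observation, $\lambda(w)=0$; as $p(f(w))=0$ routes $p$ to the left, $n_X$ must go right, forcing $n_X(f(w))=1$, i.e., $f(w)\in X$. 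Since $f(w)\in H$, we obtain $H\cap X\neq\emptyset$, and as $X$ was arbitrary, $H$ is a hitting set of size at most $k$.

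I expect the main obstacle to be the bookkeeping in the backward direction: confirming that the split node $w$ really lies on the path of $p$ and that the threshold analysis forces $f(w)\in X$ rather than merely $p$ and $n_X$ disagreeing on some branch. The normalization observation is what makes both points clean. The size statement requires no additional work, because the same set $H$ (equivalently, all of $F(T)$) has size bounded by $|T|\le k$, and swapping positive and negative leaves when passing from $E(\mathcal{I})$ to $\overline{E}(\mathcal{I})$ leaves the entire structural argument intact.
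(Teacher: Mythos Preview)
The paper does not prove this proposition; it is quoted verbatim as a result of Ordyniak and Szeider and is used as a black box throughout. Your argument is correct and is precisely the standard one: the caterpillar tree from a hitting set for the forward direction, and the features along the root-to-leaf path of the all-zeros example $p$ (for depth), respectively $F(T)$ (for size), as the hitting set for the backward direction. One cosmetic point: in your last paragraph, the set $H$ of features on the path of $p$ is not literally ``equivalently, all of $F(T)$'' in general; they can differ, but both are hitting sets and both satisfy the required cardinality bound (indeed $|H|\le \mathrm{depth}(T)\le |T|$), so the conclusion stands either way.
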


In an instance of \SC $\mathcal{I} = (\mathcal{F}, U, k)$, we are given a family of sets $\mathcal{F}$ over some universe $U$ and the problem asks whether there exist $k$ sets $X_1,\ldots, X_k \in \mathcal{F}$ such that $\bigcup_{i\in [k]} X_i = U$. Since \HS and \SC are dual problems, it is not surprising that we have the following construction, which is similar to the construction in the reduction from \HS to \DTS and \DTD: Let $\mathcal{I} = (\mathcal{F}, U, k)$ be an instance of \SC. We construct the following boolean \CI $E(\mathcal{I})$. Here, the set $F(E(\mathcal{I}))$ (feature set of $E(\mathcal{I})$) corresponds to $\mathcal{F}$ and each negative example corresponds to an element of $U$; additionally, we have a positive dummy example. More formally, we have a positive example $p$ with $p(X) = 0$ for every $X\in \mathcal{F}$; and one negative example $n_u$ for every $u\in U$ such that $n_u(X) = 1$ if $u\in X$ and $n_u(X) =0$, otherwise. Here, observe that $D_{max} = 2$. We have the following straightforward observation.
\begin{observation}\label{L:SC}
  $\mathcal{I}$ has a set cover of size $k$ iff $E(\mathcal{I})$ admits a \DT of depth (resp., size) at most $k$.  
\end{observation}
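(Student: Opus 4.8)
The plan is to prove both implications through a single caterpillar-shaped \DT, in direct analogy with the \HS reduction but with the roles of features and elements dualized. Recall that in $E(\mathcal{I})$ the unique positive example $p$ is all-zero, while each negative example $n_u$ equals $1$ precisely on the features $X \in \mathcal{F}$ with $u \in X$. The elementary fact I will use repeatedly is that two examples take the same arc at a test node whenever they share the value of the tested feature, irrespective of the threshold; consequently $p$ and a given $n_u$ can diverge only at a test on a feature $X$ with $u \in X$. Both the size and the depth bounds will ultimately be governed by the number of test nodes on the root-to-leaf path of $p$, which is what lets one construction serve both the \DTS and \DTD versions at once.

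For the forward direction, given a set cover $\{X_1,\dots,X_k\}$, I would build a path (caterpillar) \DT that tests $X_1,\dots,X_k$ in order with threshold $0$, sending the right (value-$1$) child of each test to a negative leaf and terminating the all-left branch at a single positive leaf. This tree has exactly $k$ test nodes, hence both size and depth at most $k$. It classifies $E(\mathcal{I})$: the all-zero $p$ follows the left spine to the positive leaf, while any negative $n_u$ satisfies $u \in X_i$ for some $i$ (the $X_i$ cover $U$), so $n_u(X_i)=1$ forces it onto a right arc into a negative leaf before it can reach the bottom.

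For the backward direction, given a \DT $T$ of depth (resp. size) at most $k$ that classifies $E(\mathcal{I})$, let $P$ be the root-to-leaf path traversed by $p$; since $p$ is classified correctly, $P$ ends at a positive leaf. Let $S$ be the set of features tested along $P$. Then $|S|$ is at most the number of test nodes on $P$, which is bounded by $dep(T)$ and by $|T|$, so $|S| \le k$ in either parameterization. To see that $S$ is a set cover, suppose some $u \in U$ is covered by no member of $S$, i.e. $n_u(X)=0$ for every feature $X$ tested on $P$. Then at each node of $P$ the examples $p$ and $n_u$ agree on the tested feature, hence take the same arc, so $n_u$ follows $P$ verbatim into the positive leaf -- contradicting that the negative example $n_u$ is classified correctly. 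Therefore $S$ covers $U$.

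The argument is short, and the only point requiring genuine care is the threshold and repetition handling in the backward direction. I avoid normalizing thresholds by arguing directly that $p$ and an uncovered $n_u$ agree on every feature examined along $P$ and are thus indistinguishable to $T$ whatever the thresholds are; and because I count \emph{distinct} features $|S|$ rather than test nodes, a feature occurring more than once on $P$ only improves the bound. This is the main obstacle to state cleanly, but it dissolves once the ``same value implies same arc'' observation is isolated, after which the identical path-counting delivers the conclusion simultaneously for the size and the depth variants.
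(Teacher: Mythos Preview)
Your proof is correct and is exactly the natural argument the paper has in mind; indeed, the paper leaves this observation without proof, calling it ``straightforward'' as the dual of the \HS reduction of Ordyniak and Szeider. Your caterpillar construction for the forward direction and the path-of-$p$ argument for the backward direction (with the clean ``same value implies same arc'' handling of thresholds) are precisely what is needed.
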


It is well known that \HS and \SC parameterized by $k+|\mathcal{U}|$ are unlikely to admit a polynomial kernel:

\begin{proposition}[\cite{RBDSIncompressibility}] \label{P:HS}
    \HS and \SC parameterized by $k+|U|$ do not admit a polynomial compression, unless \NPoly.
\end{proposition}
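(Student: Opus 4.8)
The plan is to prove the kernel lower bound through an OR-cross-composition; equivalently, one may reduce from \RBDS, the eponymous hard problem behind the cited bound, via parameter-preserving polynomial parameter transformations. I will describe the composition route. Recall that \SC is \NP-hard and is \FPT in $|U|$ (by the standard $2^{|U|}$ subset dynamic program), so the real content is the kernel lower bound. I would take $N$ instances $(\mathcal{F}_i,[n],k)$ of \SC that, after the usual padding, all share the universe $[n]$ and the same budget $k$ (these form one class of the polynomial equivalence relation), and build a single \SC instance over the universe $[n]$ enlarged by a set $Z$ of $\mathcal{O}(\log N)$ \emph{identification elements}, with budget $k'=k+\mathcal{O}(\log N)$. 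Since all inputs share $n$ and $k$, the output parameter $|U|+k'$ is bounded by $\mathrm{poly}(\max_j|I_j|+\log N)$, exactly as an OR-cross-composition requires, and by \cite{bookKernelization} an OR-cross-composition from an \NP-hard problem rules out polynomial compression unless \NPoly.

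The construction tags each set $X\in\mathcal{F}_i$ with a description of the binary identifier of instance $i$ over the elements of $Z$, and augments the family with a \emph{selection gadget} on $Z$ whose purpose is to let a solution ``activate'' exactly one instance. The forward direction is routine: given a size-$k$ cover of some $\mathcal{F}_i$, I would take the corresponding tagged sets together with the $\mathcal{O}(\log N)$ gadget sets that realise the identifier of $i$, yielding a cover of the whole universe within budget $k'$. The backward direction is the crux, and is where I expect the main obstacle to lie: I must show that any cover of size at most $k'$ \emph{commits} to a single input instance, so that stripping the tags leaves a size-$k$ cover of that instance's universe $[n]$. The difficulty is intrinsic. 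Because covering (and, dually, hitting) is monotone, a naive identifier tagging actually \emph{rewards} using sets drawn from several instances at once, and, since the universe must stay of size $n+\mathrm{poly}(\log N)$, we cannot simply give each instance a private copy of $[n]$. Resolving this tension is precisely the role of the colors-and-IDs selection gadget of \cite{RBDSIncompressibility}: it is engineered so that any solution fragmenting its choices across two or more instances is forced to pay strictly more than the budget $k'$, which pins the solution to one instance.

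Finally, \HS requires a separate but entirely symmetric argument. Under the incidence duality that exchanges elements and sets, \SC parameterized by $|U|+k$ and \HS parameterized by $|U|+k$ are \emph{different} parameterizations (the duality swaps $|U|$ with $|\mathcal{F}|$), so one cannot transfer the bound from one to the other for free. Instead I would run the mirror-image composition for \HS, composing on the pick-side (the universe from which the hitting set is drawn) and tagging the constraints rather than the sets, with the same selection gadget guaranteeing commitment to a single instance; the two directions and the parameter bound then go through identically. Combining the two compositions with the \NP-hardness of \HS and \SC yields the claimed absence of polynomial compression unless \NPoly.
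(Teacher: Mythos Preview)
The paper does not prove this proposition at all: it is stated as a known result and attributed to \cite{RBDSIncompressibility}, so there is no ``paper's own proof'' to compare against. Your sketch correctly identifies the method actually used in that reference --- an OR-(cross-)composition with the colors-and-IDs selection gadget --- and your observation that \HS and \SC parameterized by $k+|U|$ are \emph{not} dual to one another (the incidence duality sends $|U|$ to $|\mathcal{F}|$) is exactly right and is the reason the cited work treats them separately. As a plan this is sound; the only caveat is that you have left the selection gadget entirely abstract, and the whole weight of the backward direction rests on its precise design, so a full proof would need to spell out the gadget (or simply cite \cite{RBDSIncompressibility} as the present paper does).
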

 
Observe that the reductions provided above from \HS and \SC to \DTS and \DTD are polynomial parameter transformations, which implies the results from Observations~\ref{th:HSIncompressible} and \ref{th:nIncompressible}. 


\section{Hardness for \DTSO and \DTDO}
It was recently established that \DTS (resp., \DTD) is \FPT when parameterized by $s+\delta_{max}$ (resp., $d+ \delta_{max}$)~\cite{eibenLargeDomain}. In this section, we establish that $\DTSO$ (resp., \DTDO) is W[1]-hard when parameterized by $s+\delta_{max}$ (resp., $d+\delta_{max}$). For this purpose, we first define the problem \textsc{Partial Vertex Cover} (\textsc{PVC}). In \textsc{PVC}, given a graph $G$ and integers $k,p \in \mathbb{N}$, the goal is to decide if there is a subset $U\subseteq V(G)$ such that $|U| \leq k$ and $|\{uv~|~ uv\in E(G) \wedge \{u,v\} \cap U \neq \emptyset)\}| \geq p$. \textsc{PVC} is W[1]-hard when parameterized by $k$~\cite{pvcHard}:

\begin{proposition}[\cite{pvcHard}]\label{P:pvcHard}
    \textsc{PVC} parameterized by $k$ is W[1]-hard.
\end{proposition}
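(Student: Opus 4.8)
The plan is to establish W[1]-hardness of \textsc{PVC} (with parameter $k$) by a parameterized reduction from \textsc{Independent Set} parameterized by the solution size, which is W[1]-hard as the complement of \textsc{Clique}. The guiding observation is that if $U\subseteq V(G)$ with $|U|=k$, then the number of edges covered by $U$ equals $\sum_{v\in U}\deg(v)-e(U)$, where $e(U)$ is the number of edges having both endpoints in $U$. In a $d$-regular graph this equals $kd-e(U)$, so among all $k$-subsets the coverage is maximized exactly by the independent sets of size $k$ (those with $e(U)=0$), which attain the value $kd$. Hence, on regular graphs, asking whether $k$ vertices can cover at least $kd$ edges is the same as asking whether an independent set of size $k$ exists.

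The main difficulty is that an arbitrary \textsc{Independent Set} instance $G$ need not be regular, and for a non-regular graph a high-degree vertex may be preferable in a maximum-coverage solution regardless of independence. To handle this, I would \emph{regularize} $G$ by a simple padding gadget. Set $d=\max(\Delta(G),2)$, and for every vertex $v$ add $d-\deg_G(v)$ fresh pendant vertices adjacent only to $v$, obtaining $G'$. Now every original vertex has degree exactly $d$, while every added pendant has degree $1<d$. The instance produced is $(G',k,p)$ with $p=kd$, and the parameter is preserved ($k'=k$); the construction clearly runs in polynomial time, since at most $n\cdot d=O(n^2)$ vertices are added, where $n=|V(G)|$.

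For correctness I would argue both directions through the bound that the number of edges covered by $U$ is at most $\sum_{v\in U}\deg_{G'}(v)\le |U|\cdot d\le kd$. If $G$ has an independent set $S$ of size $k$, then $S$, viewed as a set of original vertices of $G'$, is still independent in $G'$, covers no edge twice, and hence covers exactly $kd$ edges. Conversely, if some $U$ with $|U|\le k$ covers at least $kd$ edges, then this chain of inequalities must be tight everywhere: that forces $|U|=k$, every vertex of $U$ to have degree $d$ (hence to be an original vertex, since each pendant has degree $1<d$), and $e(U)=0$, i.e.\ $U$ is an independent set of $G$ of size $k$. Thus $(G,k)$ is a yes-instance of \textsc{Independent Set} iff $(G',k,kd)$ is a yes-instance of \textsc{PVC}.

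The step I expect to require the most care is the regularization: one must check that the padding neither creates new independent sets among the original vertices (it does not, as every added edge is incident to a pendant) nor makes it advantageous to place solution vertices on the gadget (it does not, because each pendant has degree strictly below $d$, so the degree-sum bound penalizes any pendant included in $U$). The threshold $p=kd$ is then precisely the value separating an all-original, internally edgeless $k$-set from every other $k$-set, which yields the equivalence. A few degenerate cases (notably $\Delta(G)\le 1$, where \textsc{Independent Set} is already polynomial-time solvable) are absorbed by the choice $d=\max(\Delta(G),2)$ and pose no genuine obstacle.
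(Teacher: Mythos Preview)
The paper does not prove this proposition; it is stated with a citation to~\cite{pvcHard} and used as a black box for the subsequent reduction to \DTSO and \DTDO. There is therefore no ``paper's own proof'' to compare against.

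Your proposal is a correct, self-contained argument. The identity
\[
\bigl|\{e\in E(G'): e\cap U\neq\emptyset\}\bigr|=\sum_{v\in U}\deg_{G'}(v)-e(U)
\]
together with the degree cap $\deg_{G'}(v)\le d$ yields the chain $kd\le \sum_{v\in U}\deg_{G'}(v)-e(U)\le |U|\cdot d\le kd$ in the backward direction, and tightness indeed forces $|U|=k$, every vertex of $U$ to have degree exactly $d$ (hence to be an original vertex, since each pendant has degree $1<d$), and $e(U)=0$. As the pendant edges do not connect two original vertices, the edge set among original vertices in $G'$ is the same as in $G$, so $U$ is an independent set of $G$. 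The choice $d=\max(\Delta(G),2)$ is precisely what makes the ``pendants have strictly smaller degree'' step go through. The construction is polynomial and the parameter is preserved, so this is a valid parameterized reduction from \textsc{Independent Set}.
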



For the purpose of this section, let $m$ denote $|E(G)|$ and $n$ denote $|V(G)|$. Now, we provide a construction that we will be using to prove W[1]-hardness for \DTSO and \DTDO.
\begin{figure}
    \centering
    \includegraphics[scale=0.8]{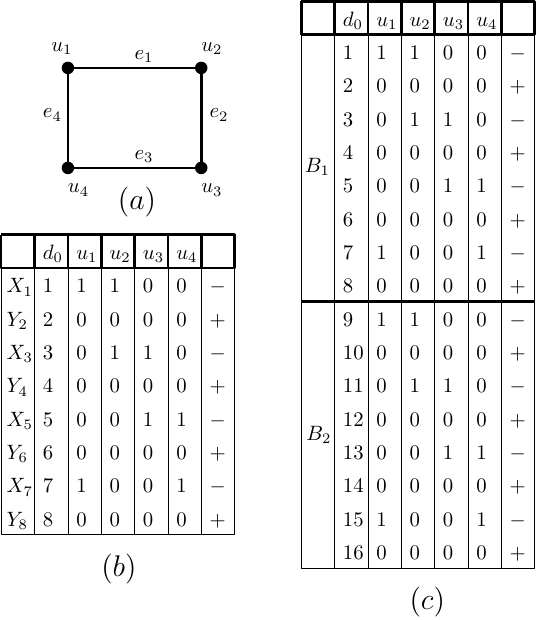}
    \caption{Here $(a)$ is the graph $G$ and $(b)$ is the \CI $E'$ corresponding to $G$. In $(c)$, we illustrate an example for constructing $E$ from $E'$ for $\eta =2$.}
    \label{fig:WHard}
\end{figure}

\subsection{Construction.} Let $(G,k,p)$ be an instance of \textsc{PVC}. Fix an ordering $e_1,\ldots, e_m$ of the edges of $G$. We consider the following \CI $E'$ with feature set $F= V(G) \cup \{d_0\}$. See Figure~\ref{fig:WHard} for a reference. Formally, for every vertex $u\in V(G)$, we have a feature $u$ in $F$, along with a \textit{dummy feature} $d_0$. The features corresponding to vertices in $V(G)$ are said to be \textit{vertex features}. For each edge $e_i\in E(G)$, we add one negative example $X_{2i-1}$ such that $X_{2i-1}(d_0) = 2i-1$ and  for a vertex feature $v$, $X_{2i-1}(v) = 1$ if $v$ is an endpoint of $e_i$, and $X_{2i-1}(v) = 0$, otherwise. Moreover, we add $m$ positive examples in the following manner. For $i\in [m]$, we add a positive example $Y_{2i}$ such that $Y_{2i}(d_0) = 2i$ and for each vertex feature $v$, $Y_{2i}(v) = 0$. 

Now, we make a \CI $E$ by taking $\eta$ copies of examples in $E'$ in the following manner. The value of $\eta$ will be fixed later for the proofs of \DTSO and \DTDO separately. Let $\ell = m-p$, and $F(E) =F(E') = V(G) \cup \{d_0\}$. To make each example coming from some copy of $E'$ unique, we will set the values of the  feature $d_0$ in increasing order. More formally, let $E'_1, \ldots E'_\eta$ be $\eta$ copies of the \CI $E'$. Now, for $i\in [\eta]$, if $e'\in E'_i$, then we add an example $e\in E$ such that $e(d_0) = 2m(i-1)+e'(d_0)$ and for every vertex feature $v\in V(G)$, $e(v)=e'(v)$. Let $B_i$, later referred to as \textit{block} $B_i$, be the set of examples in $E$ corresponding to the examples in the copy $E'_i$ of $E$.  Finally, we set $s=d=k$ and $t =\ell\eta$. This completes our construction.

\subsection{Some Preliminaries and Observations.}
Each vertex feature in $F(E)$ corresponds to a unique vertex in $V(G)$. Depending on the feature $f(v)$ of a test node $v$, we categorize the test nodes in two categories: \textit{vertex test node} where $f(v)$ is a vertex feature or a \textit{dummy test node} where $f(v) = d_0$. The positive examples in $E$ ($E^+$) are said to be \textit{dummy examples} and the negative examples in $E$ ($E^-$) are said to be \textit{edge examples}; each edge example corresponds to a unique edge, and each edge corresponds to exactly $\eta$ edge examples. We say that an edge example $e\in E$, corresponding to an edge $e'\in E(G)$, is \textit{hit} by a vertex feature $u$ if $u$ is an endpoint of $e'$. Accordingly, let $H(u)$ denote the set of all edge examples hit by the vertex feature $u$.

Since we want to learn ``small'' decision trees, we assume that our \DT $T$ has the following properties. For each test node $v$, $T_E(v)$ are non-uniform; otherwise, we can replace $T_v$ by a leaf node $v'$ to get a ``smaller'' \DT with the same classification power. Similarly, for each test node $v$ with children $l$ and $r$, $T_E(l) \neq \emptyset$ and $T_E(r) \neq \emptyset$, otherwise we can get a smaller \DT with the same classification power. To see this, let $T_E(l)= \emptyset$, then $T'$ obtained by replacing $T_v$ by $T_r$ is the desired \DT. We have the following observation.

\begin{observation}\label{O:parsimonious}
    Let $v$ be a vertex test node in a \DT $T$ for $E$, and $f(v) = x$. Moreover, let $l$ and $r$ be the left and right child of $v$, respectively. Then, $\lambda(v) = 0$, $r$ is a negative leaf, and $T_E(r)$ contains only negative examples. More specifically, $T_E(r)$ contains all of the edge examples in $T_E(v)$ that are hit by the vertex $x$.
\end{observation}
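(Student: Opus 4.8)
The plan is to analyze the structure of an optimal ``small'' decision tree at a vertex test node and show that the only sensible way to split on a vertex feature $x$ is with threshold $0$, sending all examples hit by $x$ to a negative leaf on the right. First I would recall the domain values that the feature $x$ takes across the examples in $E$. By construction, a vertex feature $u$ takes value $1$ exactly on the edge examples corresponding to edges incident to $u$ (these are the examples in $H(u)$), and value $0$ on every other example, including all dummy (positive) examples and all edge examples for edges not incident to $u$. Hence $D_E(x) = \{0,1\}$, so the only meaningful threshold choices are $\lambda(v)=0$ (separating value $0$ from value $1$) or a trivial threshold that sends everything one way. Since we assume each test node has both children nonempty (the parsimonious assumption stated just before the observation), the threshold must be $\lambda(v)=0$.

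With $\lambda(v)=0$ established, the examples routed to the right child $r$ are precisely those with $x(e)=1$, which are exactly the edge examples in $H(u)$ that also lie in $T_E(v)$. The key point is that every such example is negative: a vertex feature equals $1$ only on edge examples, and all edge examples are negative by construction (dummy examples are positive but always take value $0$ on every vertex feature). Therefore $T_E(r)$ consists entirely of negative examples, i.e.\ it is uniform. I would then invoke the parsimonious assumption that $T_E(r)$ is non-uniform for any test node: if $r$ were itself a test node, then $T_E(r)$ would be required to be non-uniform, contradicting the fact that it contains only negative examples. Hence $r$ must be a leaf, and since the examples it receives are all negative, making $r$ a negative leaf classifies all of them correctly (and is forced if we want to avoid outliers among these examples). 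This yields exactly the conclusion that $r$ is a negative leaf and $T_E(r)$ contains only negative examples, specifically all the edge examples in $T_E(v)$ hit by $x$.

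The main obstacle I anticipate is being careful about the interaction with the outlier allowance: unlike the classification (no-outlier) setting, here we are permitted up to $t$ outliers, so one might worry that an optimal tree could deliberately misroute examples. I would address this by emphasizing that the observation concerns the \emph{structure} forced by the parsimonious reductions rather than the correctness of classification: the argument that $\lambda(v)=0$ and that $r$ receives only negative examples depends only on the domain values of $x$ and on both children being nonempty, not on whether outliers are present. The claim that $r$ is a \emph{negative} leaf then follows because, given that $T_E(r)$ is uniform and negative, labeling it negative incurs zero outliers there while labeling it positive would turn every example in $T_E(r)$ into an outlier; since we seek to minimize outliers (and an equally small tree with fewer outliers is never worse for our purposes), we may assume without loss of generality that $r$ is labeled negative. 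Thus the structural normalization is compatible with the outlier model, completing the argument.
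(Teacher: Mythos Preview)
Your proposal is correct and follows essentially the same approach as the paper: use the nonempty-children assumption to force $\lambda(v)=0$, then observe that only edge examples hit by $x$ take value $1$ on feature $x$, so $T_E(r)$ is purely negative. You are in fact more explicit than the paper about why $r$ must be a leaf (uniformity contradicts the parsimonious test-node assumption) and why it should be labeled negative in the outlier setting; the paper's own proof leaves these points implicit.
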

\begin{proof}
First, targeting a contradiction, assume that $\lambda(v) \neq 0$. If $\lambda(v) > 0$ (resp., $\lambda(v) <0$), then observe that $T_E(r) = \emptyset$ (resp., $T_E(l) = \emptyset$), a contradiction. 

Since each positive example $Y_j$ has $Y_j(x) = 0$, none of the positive examples can be in $T_E(r)$. Finally, since for every edge example $X_i$ in $T_E(v)$ that has $x$ as an endpoint has $X_i(x) = 1$, $X_i$ will land up in $T_E(r)$. This completes our proof.
\end{proof}

The following observation follows  from Observation~\ref{O:parsimonious}.
\begin{observation}\label{O:goodTree}
    Let $T$ be a \DT, for $E$, with root $v_0$, and let $v_{i-1}$ be a node of $T$ having $v_i$ as its left child. Moreover, let $v_0,v_1 \ldots,v_{i-1},v_i$ be the unique $(v_0,v_i)$-path in $T$ and each node $v_j$, for $j\in [i-1]$, is a vertex test node. Then, $T_E(v_i) = E \setminus (\bigcup_{j\in [i-1]}H(f(v_j)))$.
\end{observation}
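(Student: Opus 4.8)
The plan is to run an induction down the path $v_0, v_1, \ldots, v_i$, using Observation~\ref{O:parsimonious} to peel off exactly one set $H(f(v_j))$ of hit examples at each vertex test node. Before starting the induction, I would first record that every arc of this path is a left arc. Each node $v_j$ with $0 \le j \le i-1$ is a vertex test node, so by Observation~\ref{O:parsimonious} its right child is a negative leaf and therefore has no children of its own. Hence, if some arc $v_j v_{j+1}$ with $j \le i-2$ were a right arc, then $v_{j+1}$ would be a leaf and could not carry the successor $v_{j+2}$ lying further along the path, a contradiction; and the last arc $v_{i-1} v_i$ is a left arc by hypothesis. Thus the whole $(v_0,v_i)$-path descends through left arcs only.

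With that settled, I would prove by induction on $j \in \{0, 1, \ldots, i\}$ that $T_E(v_j) = E \setminus \bigcup_{0 \le j' < j} H(f(v_{j'}))$. The base case $j=0$ is immediate: the root-to-$v_0$ path is empty, so the membership condition defining $T_E(v_0)$ is vacuous and $T_E(v_0) = E$, which equals $E$ minus the empty union. For the inductive step, fix $j \le i-1$ and set $x = f(v_j)$. Since $v_j$ is a vertex test node, Observation~\ref{O:parsimonious} gives $\lambda(v_j) = 0$, and because every vertex feature takes only the values $0$ and $1$, an example $e \in T_E(v_j)$ satisfies the left-arc condition $e(x) \le 0$ exactly when $e(x) = 0$, i.e., exactly when $e$ is not hit by $x$. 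As $v_j v_{j+1}$ is a left arc, this gives $T_E(v_{j+1}) = T_E(v_j) \setminus H(x)$; substituting the induction hypothesis yields $T_E(v_{j+1}) = E \setminus \bigcup_{0 \le j' \le j} H(f(v_{j'}))$. Setting $j = i$ produces the claimed identity, the union running over the vertex test nodes $v_0, \ldots, v_{i-1}$ of the path.

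I expect the only genuinely delicate point to be the preliminary observation that the entire path consists of left arcs; this is what legitimizes the ``remove $H(f(v_j))$ and nothing else'' description and excludes the degenerate case in which the path would attempt to continue through a right child that Observation~\ref{O:parsimonious} has already pinned down as a leaf. After that, the argument is bookkeeping with set differences: the one subtlety to keep in mind is that $H(x)$ is the set of all edge examples hit by $x$ whereas the left branch only discards $H(x) \cap T_E(v_j)$, but since we intersect against the shrinking set $T_E(v_j)$ at every step these coincide and the union telescopes cleanly.
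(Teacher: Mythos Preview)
Your argument is correct and is exactly the induction the paper has in mind when it says the observation ``follows from Observation~\ref{O:parsimonious}''; the paper supplies no further proof beyond that sentence. You also quietly repair the paper's indexing slip (the root $v_0$ must itself be a vertex test node and $H(f(v_0))$ must appear in the union, as your version has it), which is the reading actually used later in Lemma~\ref{L:Onesideboth}.
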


Next, we have the following easy lemma that will be used to prove one side of our reduction. 

\begin{lemma}\label{L:Onesideboth}
    If $(G,k,p)$ is a Yes-instance of \textsc{PVC}, then there is a \DT $T$ such that $|T| = dep(T) \leq k$ and $|O(T,E)| \leq \ell \eta$.
\end{lemma}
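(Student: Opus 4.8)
The plan is to exhibit an explicit decision tree built from a partial vertex cover and verify both its complexity (size equals depth, at most $k$) and its outlier count (at most $\ell\eta$). Let $U\subseteq V(G)$ with $|U|\le k$ witness that $(G,k,p)$ is a Yes-instance of \textsc{PVC}, so that $U$ covers at least $p$ edges of $G$. Write $U=\{u_1,\ldots,u_r\}$ with $r\le k$. First I would construct $T$ as a single ``left-spine'' path: create vertex test nodes $v_1,\ldots,v_r$ where $f(v_j)=u_j$ and, invoking Observation~\ref{O:parsimonious}, set each threshold $\lambda(v_j)=0$; the right child of each $v_j$ is a negative leaf, the left child of $v_j$ is $v_{j+1}$ for $j<r$, and the left child of $v_r$ is a single positive leaf. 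This tree has exactly $r$ test nodes all lying on one root-to-leaf path, so immediately $|T|=dep(T)=r\le k=s=d$, giving the complexity bound.

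Next I would count the outliers. By Observation~\ref{O:goodTree}, the examples reaching the final positive leaf are exactly $E\setminus(\bigcup_{j\in[r]}H(u_j))$, and by Observation~\ref{O:parsimonious} each right child $v_j$ is a negative leaf receiving precisely the edge examples in its subtree that are hit by $u_j$. The key correctness facts to check are: (i) every dummy (positive) example has value $0$ on every vertex feature, so no positive example is ever routed to a right (negative) leaf, and all positive examples reach the positive leaf and are classified correctly; (ii) an edge example is correctly classified (lands at a negative leaf) if and only if the corresponding edge is hit by some $u_j\in U$. Therefore the outliers are exactly those edge examples corresponding to edges \emph{not} covered by $U$. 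Since $U$ covers at least $p$ edges, at most $m-p=\ell$ edges are uncovered, and each uncovered edge contributes exactly $\eta$ edge examples across the $\eta$ blocks; hence $|O(T,E)|\le \ell\eta=t$, as required.

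The main obstacle — really the only nontrivial point — is making precise that no positive example is misrouted and that the outlier set is exactly the set of edge examples of uncovered edges. This rests entirely on the construction: every positive example $Y_{2i}$ satisfies $Y_{2i}(v)=0$ for all vertex features $v$, so it survives every $\le 0$ test and reaches the positive leaf, while a negative example $X_{2i-1}$ has $X_{2i-1}(u)=1$ exactly when $u$ is an endpoint of $e_i$, so it is diverted to a negative leaf precisely when one of its endpoints appears among $u_1,\ldots,u_r$. I would state these routing facts as short consequences of the two preceding observations rather than re-deriving them, and then tally the count. No genuine difficulty arises beyond bookkeeping, which is why the lemma is advertised as ``easy''; the substance of the reduction lies in the converse direction, handled separately.
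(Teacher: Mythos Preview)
Your proposal is correct and mirrors the paper's own proof almost exactly: both build a left-spine tree of vertex test nodes (threshold $0$) from the partial vertex cover, invoke Observations~\ref{O:parsimonious} and~\ref{O:goodTree} to see that all positives reach the unique positive leaf while negatives are diverted iff their edge is covered, and then count the remaining uncovered-edge examples as at most $(m-p)\eta=\ell\eta$. The only cosmetic difference is that you index the spine by $r=|U|\le k$ whereas the paper writes $u_1,\ldots,u_k$ directly.
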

\begin{proof}
    Let $S$ be a minimal partial vertex cover of $G$ of size at most $k$ that hits at least $p$ edges. Let $u_1,\ldots, u_k$ be an arbitrary ordering of vertices of $S$. Then, we construct a \DT $T$ (for $E$) in the following manner. See Figure~\ref{fig:oneSide} for a reference. The root node of $T$ is $v_1$. Moreover, for $i \in  [k-1]$, each node $v_i$ has a left child $v_{i+1}$ and a right child which is a negative leaf node $l_{i}$. The children of the node $v_k$ are both leaf nodes: the left child is a positive leaf node $l'$, and the right child is a negative leaf node $l_k$. Finally, for each test node $x_i$, let $f(v_i) = u_i$ and $\lambda(v_i) = 0$. Clearly, $|T|= dep(T) =|S|\leq k=s$. 

    Let $E'(G)\subseteq E(G)$ be the set of edges hit by $S$ and let $E''\subseteq E$ be the set of examples corresponding to the edges in $E'(G)$. Then, due to the construction of $T$ and Observation~\ref{O:goodTree}, we have that each leaf other than $l'$ (which is a positive leaf) receives only negative examples and is a negative leaf. So, the outliers are only coming from the leaf $l'$ (i.e., $O(T,E) \subseteq T_E(l')$), and these are exactly the negative examples in $E^- \setminus E''$ (due to Observation~\ref{O:goodTree}). Since each edge in $E(G)$ corresponds to exactly $\eta$ examples in $E$, we have $|O(T,E)| \leq |E^-|-|E''| = \eta|E(G)|-\eta|E'(G)| = \eta(m-p)= \ell\eta$. This completes our proof.
\end{proof}
\begin{figure}
    \centering
    \includegraphics[scale =1]{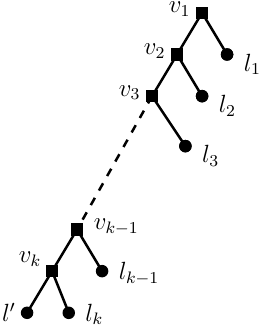}
    \caption{Illustration of $T$ corresponding to a set $S= \{u_1,\ldots, u_k\}$. Here, for $i\in [k]$, $v_i$ is a test node with $f(v_i)=u_i$ and $\lambda(v_i) = 0$. Moreover, the right child $l_i$ of $v_i$ is a negative leaf, and the left child $l'$ of $v_k$ is a positive leaf.}
    \label{fig:oneSide}
\end{figure}

Let $B_i$, for $i\in \eta$, be a block of $E$, and $T$ be a \DT for $E$. Then, we say that a (dummy) test node $v$ of $T$ \textit{intersects} $B_i$ if $f(v) =d_0$ and $2(i-1)m+1 \leq \lambda(v) \leq 2im$.  Observe that a (dummy) test node can intersect at most one block $B_i$. Thus, $T$ can have at most $|T|$ many blocks that are intersected by some test, and hence at least $\eta -|T|$ (we assume that $\eta \geq |T|$) blocks of $E$ are not intersected by any test node of $T$. Let $l'$ be a leaf of a \DT $T$ for $\mathcal{E}$. Moreover, we say that a block $B_i$ is \textit{contained} in $l'$ if each positive example $e$ of $B_i$ is in $T_{E}(l')$, i.e., $T_{E}(l') \cap E^+ \cap B_i = B_i \cap E^+$. Finally, we have the following observation. 
\begin{observation}\label{O:hardness}
    If a block $B_i$ is not intersected by any test node of $T$, then $B_i$ is contained in some leaf of $T$. Thus, there are at least $\eta -|T|$ blocks of $E$ that are contained in some leaf of $T$.
\end{observation}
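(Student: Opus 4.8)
The plan is to prove Observation~\ref{O:hardness} directly from the structure of the blocks and the definition of ``intersects.'' The key insight is that the examples within a single block $B_i$ are distinguished from one another \emph{only} by the dummy feature $d_0$: every example in $B_i$ is a copy of some $e' \in E'$, and across the whole block the $d_0$-values range exactly over $\{2(i-1)m+1, \ldots, 2im\}$, while on the vertex features the examples take the same pattern of $0$/$1$ values as in $E'$. Crucially, any two examples in $B_i$ that share the same vertex-feature values are separated \emph{only} by $d_0$; and the positive (dummy) examples $Y_{2j}$ in a block all have $0$ on every vertex feature, so the positive examples of $B_i$ are pairwise indistinguishable on vertex features and differ only in $d_0$.

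First I would set up the following claim: if no test node of $T$ intersects $B_i$, then all positive examples of $B_i$ reach the \emph{same} leaf of $T$. To see this, I would trace any positive example $e \in B_i \cap E^+$ down the tree and argue by induction on the root-to-leaf path that the path taken is independent of which positive example of $B_i$ we chose. At each test node $v$ on the path, there are two cases. If $v$ is a vertex test node, then since every positive example of $B_i$ has value $0$ on every vertex feature (indeed $Y_{2j}(v)=0$ for all vertex features), all positive examples of $B_i$ evaluate identically at $v$ and hence follow the same arc. If $v$ is a dummy test node with $f(v)=d_0$, then because $v$ does \emph{not} intersect $B_i$, its threshold $\lambda(v)$ lies outside the interval $[2(i-1)m+1,\, 2im]$; consequently, either $\lambda(v) < 2(i-1)m+1$, in which case every $d_0$-value in $B_i$ exceeds $\lambda(v)$ and all positive examples of $B_i$ take the right arc, or $\lambda(v) > 2im$ (more precisely $\lambda(v) \geq 2im$), in which case every $d_0$-value in $B_i$ is at most $\lambda(v)$ and all positive examples take the left arc. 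In both cases all positive examples of $B_i$ follow the same arc, so by induction they all arrive at a common leaf $l'$. This means $B_i \cap E^+ \subseteq T_E(l')$, which is exactly the definition of $B_i$ being contained in $l'$.

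For the counting part, I would invoke the discussion immediately preceding the observation: each dummy test node intersects at most one block, and $T$ has exactly $|T|$ test nodes, so at most $|T|$ blocks are intersected by some test node of $T$. Under the standing assumption $\eta \geq |T|$, this leaves at least $\eta - |T|$ blocks that are intersected by no test node, and by the first part each such block is contained in some leaf of $T$. This yields the stated bound of at least $\eta - |T|$ contained blocks.

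I expect the main (though modest) obstacle to be stating the ``same-leaf'' argument cleanly, since it hinges on two facts that must be made precise: that positive examples of a block are constant on all vertex features (so vertex test nodes cannot split them), and that a non-intersecting dummy threshold sits strictly to one side of the block's entire $d_0$-range (so dummy test nodes cannot split them either). Care is needed at the boundary values of the interval $[2(i-1)m+1, 2im]$ to ensure the intersection condition and the $\leq/>$ convention of the decision tree's arcs line up correctly, but once these endpoints are checked the induction is routine and the observation follows.
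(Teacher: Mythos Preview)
Your proposal is correct and follows essentially the same approach as the paper: both arguments show that all positive examples of a non-intersected block $B_i$ reach the same leaf by observing that vertex test nodes cannot separate them (they all have value $0$ on every vertex feature) and that non-intersecting dummy test nodes cannot separate them (the threshold lies entirely to one side of the block's $d_0$-range). The paper phrases this as a proof by contradiction at a separating test node, while you phrase it as a direct induction down the root-to-leaf path; your parenthetical ``more precisely $\lambda(v)\geq 2im$'' is slightly off (non-intersection gives strictly $\lambda(v)>2im$), but this does not affect the argument.
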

\begin{proof}
 Recall that any two distinct positive examples of $E$ differ in exactly one feature, which is $d_0$. Targeting contradiction, assume that there are two distinct positive examples $e$ and $e'$ of some block $B_i$ such that $B_i$ is not intersected by any test node and $e$ and $e'$ end up in different leaves. Hence, there exists at least one test node $v$ such that $f(v) = d_0$ and $\lambda(v)$ is such that either $e(d_0) \leq \lambda(v)$ and $e'(d_0) >\lambda(v)$ or  $e'(d_0) \leq \lambda(v)$ and $e(d_0) >\lambda(v)$. WLOG, let us assume that $e(d_0) \leq \lambda(v)<e'(d_0)$. Thus, by the construction of $E$, we have that $2(i-1)m+1 \leq \lambda(v) < 2im$, which in turn implies that the test node $v$ intersects $B_i$, a contradiction. The rest of the proof follows from the fact that at least $\eta -|T|$ blocks of $E$ are not intersected by any test node of $T$. 
\end{proof}

Let $T$ be a \DT for $E$. Moreover, let $B$ be a block of $E$ such that $B$ is contained in some leaf $l$ of $T$. Then, let $N(l,B)$ denote the set of negative examples of block $B$ that are in $T_E(l)$, i.e., $N(l,B) = \{B\cap E^-  \cap T_{E}(l) \}$. Next, we have the following lemma.

 \begin{lemma}\label{L:HI1}
        If $|N(l,B)| > \ell$ for each block $B$ such that $B$ is contained in some leaf $l$ of $T$, then $|O(T,E)|\geq  (\eta -|T|) (\ell+1)$.
    \end{lemma}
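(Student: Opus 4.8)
The plan is to combine the counting guarantee of Observation~\ref{O:hardness} with a simple per-block lower bound on outliers. First I would use Observation~\ref{O:hardness} to fix a family $\mathcal{B}$ of at least $\eta-|T|$ blocks, each of which is contained in some leaf of $T$. For a block $B\in\mathcal{B}$, let $l_B$ be the leaf in which $B$ is contained; this leaf is unique, since the sets $T_E(\cdot)$ over the leaves partition $E$, and all positive examples of $B$ lie in $T_E(l_B)$. The goal is then to show that each $B\in\mathcal{B}$ forces at least $\ell+1$ outliers that lie inside $B$, and finally to sum these contributions over all of $\mathcal{B}$.

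The heart of the argument is a case distinction on the label of $l_B$. If $l_B$ is a positive leaf, then every negative example routed to $l_B$ is an outlier; in particular all examples of $N(l_B,B)$ are outliers, and by the hypothesis $|N(l_B,B)|>\ell$, so $B$ contributes at least $\ell+1$ outliers. If instead $l_B$ is a negative leaf, then every positive example routed to $l_B$ is an outlier; since $B$ is contained in $l_B$, all $m$ dummy (positive) examples of $B$ lie in $T_E(l_B)$ and are therefore outliers. As $m=p+\ell$ and $p\geq 1$, this again yields at least $\ell+1$ outliers from $B$. Hence, in both branches $B$ contributes at least $\ell+1$ outliers, all of which belong to $B$.

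Finally I would assemble the global bound. Distinct blocks consist of pairwise disjoint sets of examples, so the outliers counted for distinct blocks of $\mathcal{B}$ are themselves distinct, and there is no double counting even when several blocks happen to share a leaf. Summing the per-block bound over the at least $\eta-|T|$ blocks of $\mathcal{B}$ gives $|O(T,E)|\geq (\eta-|T|)(\ell+1)$, as claimed. The lemma is structurally easy, so the only points that truly require care are the negative-leaf branch and the summation: the main obstacle is making sure that \emph{both} branches deliver $\ell+1$ rather than $\ell$, which for the negative leaf forces me to invoke containment (to get all $m$ positives as outliers) together with the arithmetic $m=p+\ell\geq\ell+1$, the one place where the construction (one dummy example per edge in each copy, plus $p\geq 1$) is genuinely used; and to rely on the disjointness of blocks to legitimately add the contributions.
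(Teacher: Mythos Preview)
Your proposal is correct and follows essentially the same approach as the paper: both use Observation~\ref{O:hardness} to find at least $\eta-|T|$ blocks contained in leaves, perform a case distinction on the leaf label to extract at least $\ell+1$ outliers from each such block, and sum over disjoint blocks. The only cosmetic difference is in the negative-leaf branch, where you argue $m=p+\ell\geq \ell+1$ via $p\geq 1$, whereas the paper uses $m\geq |N(l,B)|>\ell$; both are valid (and the lemma is vacuous when $p=0$, since then $|N(l,B)|\leq m=\ell$ cannot exceed $\ell$).
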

    \begin{proof}
        Consider a leaf $l$ of $T$ such that the block $B$ is contained in $l$. First, observe that the number of outliers the block $B$ contributes to the leaf $l$ is at least $N(l,B)$. This is trivial if $l$ is a positive leaf, and if $l$ is a negative leaf, then observe that $B\cap E^+  = m \geq |N(l,B)|$. Thus, $B$ contributes at least $|N(l,B)|$ many outliers to $O(T,E)$. Finally, since we have at least $\eta -|T|$ blocks such that each of these blocks is contained in some leaf, due to Observation~\ref{O:hardness}, and every such block $B$ contributes at least $|N(l,B)| \geq \ell+1$ outliers to $T$, we have that $|O(T,E)|\geq  (\eta -|T|) (\ell+1)$.
    \end{proof}

The following lemma  establishes that if there is some block $B$ of $E$ such that $B$ is contained in some leaf $l$ of $T$ and $|N(l,b)| \leq \ell$, then $G$ has a partial vertex cover of size at most $dep(T)$ that hits at least $p = m-\ell$ edges.
\begin{lemma}\label{L:HI2}
    Let $T$ be a \DT for $E$. If there exists some block $B$ of $E$ such that $B$ is contained in some leaf $l$ of $T$ and $|N(l,B)| \leq \ell$, then $(G,dep(T),p)$ is a Yes-instance of \textsc{PVC}.
\end{lemma}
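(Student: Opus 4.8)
The plan is to read a partial vertex cover directly off the root-to-$l$ path and then show it hits at least $p$ edges. Let $B=B_i$ be the block and $l$ the leaf provided by the hypothesis, and consider the unique path $P$ from the root of $T$ to $l$. I would partition the test nodes of $P$ into the set $V_P$ of vertex test nodes and the set $D_P$ of dummy test nodes, so that $|V_P|+|D_P|\le dep(T)$ by the definition of depth. Since every positive (dummy) example of $B$ lies in $T_E(l)$, each such example traverses every node of $P$; at a vertex test node $v$ we have $\lambda(v)=0$ by Observation~\ref{O:parsimonious}, so these examples (whose vertex-features equal $0$) take the left arc. Hence $P$ leaves every vertex test node through its left arc, and the edge examples of $B$ hit by $f(v)$ are diverted into the negative-leaf right child. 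I would then set $U=\{f(v)\mid v\in V_P\}\subseteq V(G)$, giving $|U|\le |V_P|\le dep(T)-|D_P|$, and record that an edge of $G$ is hit by $U$ exactly when one of its endpoints is the feature of a node of $V_P$, so that the edge examples of $B$ removed by the vertex test nodes of $P$ are precisely those of the edges hit by $U$.

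The key step is to bound how many edge examples of $B$ can be deleted along $P$ by the \emph{dummy} test nodes rather than by $V_P$. These nodes impose constraints of the form $e(d_0)\le\lambda$ or $e(d_0)>\lambda$, so the surviving examples are exactly those whose $d_0$-value lies in one interval $(A,B']$. Because the positive examples of $B$ (even offsets) are interleaved in $d_0$-value with its negative examples (odd offsets), and all the positives survive, the interval must satisfy $A\le 2m(i-1)+1$ and $B'\ge 2mi$; consequently the only negative (edge) example of $B$ that can violate this interval is the unique one of smallest value $d_0=2m(i-1)+1$, namely the example corresponding to the first edge $e_1$. Thus the dummy nodes of $P$ delete \emph{at most one} edge example of $B$.

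Finally I would count the edges missed by $U$. If an edge $e_j$ has no endpoint in $U$, then its example in $B$ is not diverted by any vertex test node, so by Observation~\ref{O:parsimonious} it either reaches $l$, contributing to $N(l,B)$, or is the single dummy-deletable example of $e_1$. Since distinct edges give distinct examples in $N(l,B)$, the number of edges missed by $U$ is at most $|N(l,B)|+1\le \ell+1$. If this count is in fact at most $\ell$, then $U$ hits at least $m-\ell=p$ edges using $|U|\le dep(T)$ vertices, so $(G,dep(T),p)$ is a Yes-instance. Otherwise $e_1$ is missed and its example was genuinely dummy-deleted, which forces $|D_P|\ge 1$ and hence $|U|\le dep(T)-1$; adding an arbitrary endpoint of $e_1$ to $U$ then yields a vertex set of size at most $dep(T)$ that hits at least $p$ edges, again witnessing a Yes-instance of \textsc{PVC}. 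I expect the delicate point to be exactly this reconciliation of an off-by-one: the vertex test nodes alone may miss one edge per contained block, and the argument closes only because any dummy node capable of stripping that edge simultaneously consumes a slot on $P$, freeing a slot in the size/depth budget to reinsert the missing endpoint.
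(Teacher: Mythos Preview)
Your proof is correct and follows the paper's overall strategy: extract a candidate partial vertex cover $U$ from the vertex test nodes on the root-to-$l$ path $P$, then argue that edges missed by $U$ correspond to negative examples of $B$ that reach $l$. The paper's argument is shorter because it asserts, citing Observation~\ref{O:hardness}, that $B$ is not intersected by any dummy test node and hence every negative example of $B$ outside $T_E(l)$ was diverted by a vertex test on $P$; but Observation~\ref{O:hardness} only states the implication ``not intersected $\Rightarrow$ contained'', and indeed a dummy threshold $\lambda=2m(i-1)+1$ on $P$ both intersects $B_i$ and keeps all positives of $B_i$ on one side. Your interval analysis pinpoints exactly this boundary case, shows that at most the single $e_1$-example can be stripped by dummy tests, and then repairs the resulting off-by-one by trading the dummy node's slot in the depth budget for an endpoint of $e_1$. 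So your argument proves the lemma exactly as stated, whereas the paper's argument is really valid for non-intersected blocks---which still suffices for its downstream uses in Lemmas~\ref{L:DTSO} and~\ref{L:DTDO}, since one can arrange (via Observation~\ref{O:hardness}) that the witnessing block there is non-intersected.
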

\begin{proof}
    Let $v_0$ be the root of $T$. Consider the unique $(v_0,l)$-path $P$ in $T$. We claim that the the set $S = \{ f(v)~|~ v\in V(P)\} \setminus \{d_0\}$ is a partial vertex cover of $G$ that hits at least $p$ edges. Since $B$ is not intersected by any dummy test node (due to Observation~\ref{O:hardness}), each negative example of $B$ that is not in $T_E(l)$ is hit by some vertex feature on the path $P$. Since $|N(l,B)| \leq \ell$, the number of negative examples of $B$ hit by vertex test nodes in $P$ is at least $m-\ell$. Since each negative example in $B$ corresponds to a unique edge of $G$, the set $S$ hits at least $m-\ell = m- (m-p) = p$ edges of $E(G)$. This completes our proof.
\end{proof}

\subsection{W[1]-hardness proofs.} Now, we present our main lemmas. The following lemma completes our reduction for \DTSO. 
\begin{lemma}\label{L:DTSO}
    Let $\eta = s(\ell+2)$. Then, $(G,k,p)$ is a Yes-instance of \textsc{PVC} iff $(E,s,t)$ is a YES-instance of \DTSO.
\end{lemma}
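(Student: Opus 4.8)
The plan is to prove the two directions of the biconditional separately, leaning on the machinery already assembled. The forward direction is essentially free: if $(G,k,p)$ is a Yes-instance of \textsc{PVC}, then since $s=k$, Lemma~\ref{L:Onesideboth} hands us a \DT $T$ with $|T| \leq k = s$ and $|O(T,E)| \leq \ell\eta = t$, so $(E,s,t)$ is a Yes-instance of \DTSO. The particular value of $\eta$ plays no role here, so I would dispatch this direction in one line.

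For the backward direction, I would start from a \DT $T$ witnessing that $(E,s,t)$ is a Yes-instance, i.e.\ $|T| \leq s$ and $|O(T,E)| \leq t = \ell\eta$, and aim to produce a partial vertex cover. First, since $\eta = s(\ell+2) \geq s \geq |T|$, Observation~\ref{O:hardness} applies and tells me that at least $\eta - |T|$ blocks of $E$ are contained in some leaf of $T$. The key idea is then a dichotomy on these contained blocks, exactly matching the hypotheses of Lemmas~\ref{L:HI1} and~\ref{L:HI2}: either every contained block $B$ (sitting in a leaf $l$) has $|N(l,B)| > \ell$, or some contained block has $|N(l,B)| \leq \ell$.

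In the first case, Lemma~\ref{L:HI1} gives $|O(T,E)| \geq (\eta - |T|)(\ell+1)$. Using $|T| \leq s$ and $\eta - s = s(\ell+1)$, this is at least $(\eta-s)(\ell+1) = s(\ell+1)^2 = s(\ell^2+2\ell+1)$, while the outlier budget is $t = \ell\eta = s\ell(\ell+2) = s(\ell^2+2\ell)$. Hence $|O(T,E)| \geq t+s > t$ (assuming $s \geq 1$), contradicting $|O(T,E)| \leq t$; so this case cannot occur. In the remaining case, Lemma~\ref{L:HI2} directly yields that $(G, dep(T), p)$ is a Yes-instance of \textsc{PVC}; since $dep(T) \leq |T| \leq s = k$, a partial vertex cover of size at most $dep(T)$ is in particular one of size at most $k$, so $(G,k,p)$ is a Yes-instance, completing the reduction.

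The hard part is not conceptual but arithmetic: everything hinges on the single inequality $(\eta - |T|)(\ell+1) > \ell\eta$ holding for every admissible $|T| \leq s$, whose worst case $|T|=s$ demands $\eta > s(\ell+1)$. This is precisely why the construction fixes $\eta = s(\ell+2)$ --- the smallest clean multiple of $s$ clearing that threshold --- giving the slack of $s$ outliers that drives the contradiction. I would double-check the boundary behavior ($\ell = 0$, i.e.\ $p = m$, and $s \geq 1$) to be sure the strict inequality survives, but no deeper obstacle appears beyond this calibration.
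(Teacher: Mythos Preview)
Your proposal is correct and follows essentially the same approach as the paper's own proof: the forward direction via Lemma~\ref{L:Onesideboth}, and the backward direction via the dichotomy that either Lemma~\ref{L:HI1} forces $|O(T,E)| \geq s(\ell+1)^2 > s\ell(\ell+2) = t$ (a contradiction) or Lemma~\ref{L:HI2} yields the desired partial vertex cover, using $dep(T)\leq |T|\leq s=k$. Your treatment is marginally more careful in handling $|T|\leq s$ rather than $|T|=s$ and in noting the boundary cases, but the argument is the same.
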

\begin{proof}
    The proof of one direction follows from Lemma~\ref{L:Onesideboth}. 

    In the other direction, let $(E,s,t)$ be a YES-instance of \DTSO, and let $T$ be a \DT such that $|O(T,E)| \leq t  =\eta \ell = s\ell(\ell+2)$ and $|T| =s$. Due to Lemma~\ref{L:HI1}, if $|N(l,B)| >\ell$ for each block $B$ of $E$ such that $B$ is contained in some leaf of $T$, then $|O(T,E)| \geq (\eta -|T|)(\ell+1)$. Since $|T| =s$ and $\eta = s(\ell+2)$, $|O(T,E)| \geq  (s(\ell+2)-s)(\ell+1) = s(\ell+2-1)(\ell+1) = s(\ell+1)^2$. Since $s(\ell^2+2\ell+1) > s(\ell^2+2\ell)  = \ell \eta = t$, it contradicts the fact that $(E,s,t)$ is a YES-instance of \DTSO. Hence, there is at least one block $B$ of $E$ such that $B$ is contained in some leaf $l$ of $T$ and $|N(l,B)|\leq \ell$. Therefore, due to Lemma~\ref{L:HI2} and the fact that $dep(T) \leq s = k$, $(G,k,p)$ is a Yes-instance of \textsc{PVC}.
\end{proof}

Next, we have the following lemma that completes our reduction for \DTDO.

\begin{lemma}\label{L:DTDO}
Let $\eta = 2^d(\ell+2)$. Then, $(G,k,p)$ is a Yes-instance of \textsc{PVC} iff $(E,d,t)$ is a YES-instance of \DTDO.
\end{lemma}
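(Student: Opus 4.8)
The plan is to follow the template already used in the proof of Lemma~\ref{L:DTSO} and change only the way the number of test nodes is controlled. For the size version we could set $|T|=s$ directly; for depth we instead exploit that a decision tree is binary and every root-to-leaf path has at most $d$ test nodes, so the tree has at most $2^d-1$ test nodes. This is precisely why $\eta$ is inflated to $2^d(\ell+2)$ rather than $s(\ell+2)$: the counting bound $\eta-|T|$ from Observation~\ref{O:hardness} must still dominate $t$, and $|T|$ is now bounded by $2^d-1$ instead of $s$. The forward direction needs no new work: if $(G,k,p)$ is a Yes-instance, Lemma~\ref{L:Onesideboth} produces a \DT $T$ with $dep(T)=|T|\leq k=d$ and $|O(T,E)|\leq \ell\eta=t$, so $(E,d,t)$ is a Yes-instance of \DTDO.

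For the converse, I would take a \DT $T$ witnessing the Yes-instance, so $dep(T)\leq d$ and $|O(T,E)|\leq t=\ell\eta=2^d\ell(\ell+2)$, and first record the structural bound $|T|\leq 2^d-1$. I then run the dichotomy of Lemmas~\ref{L:HI1} and~\ref{L:HI2} on the blocks contained in leaves of $T$, which are plentiful by Observation~\ref{O:hardness}. If every contained block $B$ (in a leaf $l$) satisfies $|N(l,B)|>\ell$, then Lemma~\ref{L:HI1} gives $|O(T,E)|\geq (\eta-|T|)(\ell+1)$. Plugging in $|T|\leq 2^d-1$ yields $\eta-|T|\geq 2^d(\ell+1)+1$, hence $|O(T,E)|\geq \big(2^d(\ell+1)+1\big)(\ell+1)>2^d(\ell+1)^2$. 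The choice of $\eta$ is calibrated so that this strictly exceeds $t$, since $2^d(\ell+1)^2 = 2^d(\ell^2+2\ell+1) > 2^d(\ell^2+2\ell) = \ell\cdot 2^d(\ell+2) = \ell\eta = t$, a contradiction. Therefore some contained block $B$ has $|N(l,B)|\leq \ell$, and Lemma~\ref{L:HI2} gives that $(G,dep(T),p)$ is a Yes-instance of \textsc{PVC}; as $dep(T)\leq d=k$, so is $(G,k,p)$.

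The only genuinely new ingredient relative to \DTSO is the inequality $|T|\leq 2^d-1$ for a depth-$d$ tree, and I do not expect a real obstacle beyond getting the arithmetic to close: the strict gap is exactly $(\ell+1)^2-\ell(\ell+2)=1>0$, which is what the extra additive $2$ in $\eta=2^d(\ell+2)$ buys. The one point worth flagging is that $\eta$ is now exponential in $d$, so the output has size $2^d\cdot\mathrm{poly}(m)$; this is still acceptable, since a parameterized reduction may run in \FPT time, the new parameter $d+\delta_{max}=k+\mathcal{O}(1)$ remains bounded by a function of $k$ (with $\delta_{max}\leq 3$ as in the construction), and combined with Proposition~\ref{P:pvcHard} this establishes the claimed W[1]-hardness for \DTDO.
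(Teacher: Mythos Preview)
Your proof is correct and follows essentially the same approach as the paper: both directions invoke Lemma~\ref{L:Onesideboth}, Lemma~\ref{L:HI1}, and Lemma~\ref{L:HI2} in the same way, with the only new ingredient being the bound $|T|\leq 2^d$ (you use the slightly tighter $2^d-1$, which makes no difference) and the identical arithmetic $2^d(\ell+1)^2>2^d\ell(\ell+2)=t$. Your closing remark that the blow-up to $\eta=2^d(\ell+2)$ still yields an \FPT-time parameterized reduction is a useful addition that the paper leaves implicit.
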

\begin{proof}
    The proof of one direction follows from Lemma~\ref{L:Onesideboth}. 

    In the other direction, let $(E,d,t)$ be a YES-instance of \DTDO, and let $T$ be a \DT such that $|O(T,E)| \leq t =\eta \ell = 2^d(\ell+2) \ell$ and $dep(T) =d$. Since $T$ is a binary tree, observe that $|T| \leq 2^d$.  Now, due to Lemma~\ref{L:HI1}, if $|N(l,B)| >\ell$ for each block $B$ of $E$ such that $B$ is contained in some leaf $l$ of $T$, then $|O(T,E)| \geq (\eta -|T|)(\ell+1)$. Since $|T| \leq 2^d$ and $\eta = 2^d(\ell+2)$, $|O(T,E)| \geq  (2^d(\ell+2)-2^d)(\ell+1) = 2^d(\ell+1)^2$. Since $2^d(\ell^2+2\ell+1) > 2^d(\ell^2+2\ell)  = \ell \eta = t$, it contradicts the fact that $(E,d,t)$ is a YES-instance of \DTDO. Hence, there is at least one block $B$ of $E$ such that $B$ is contained in some leaf $l$ of $T$ and $|N(l,B)|\leq \ell$. Therefore, due to Lemma~\ref{L:HI2} and the fact that $dep(T) \leq d = k$, we have that $(G,k,p)$ is a Yes-instance of \textsc{PVC}.
\end{proof}

Finally, we have the following theorem.
\WHard*

\begin{proof}
    Recall that in our construction of \CI $E$ from an instance $(G,k,t)$, $\delta_{max} \leq 3$. Moreover, we have that $d = s = k$.  Hence, the proof follows from our construction, Lemmas~\ref{L:DTSO} and \ref{L:DTDO}, and Preposition~\ref{P:pvcHard}.
\end{proof}

\section{\textsc{FPT} algorithms for \DTSO and \DTDO}
In this section, we complement Theorem~\ref{th:Whard} by establishing that \DTSO and \DTDO are \FPT when parameterized by $s+\delta_{max}+t$ and $d+\delta_{max}+t$, respectively. In regard to previous literature, we consider the size of our \CI $E$ as $||E||=|E|\cdot(|F(E)|+1)\cdot \log D_{max}$.  Our \FPT algorithm extends the \FPT algorithms for \DTD and \DTS parameterized by $d+\delta_{max}$ and $s+\delta_{max}$ provided by Eiben et al.~\cite{eibenLargeDomain}. The algorithm of~\cite{eibenLargeDomain} builds upon the algorithm of Ordyniak and Szeider~\cite{ordyniak}. Our algorithm follows the steps of their algorithm, and to avoid repetition, we provide details only about the parts that are new and non-trivial.
One important component of both these algorithms is that we can enumerate all the minimal support sets of size $k$ in \FPT (parameterized by $k+\delta_{max}$) time. 

Let $T$ be a \DT that classifies $E$. Then, observe that $F(T)$ is a support set for $E$~\cite{ordyniak}. We extend this notion to our setting. Consider $E$ and a subset $S\subseteq F(E)$. Let $E_1,\ldots,E_p$ be a partition of examples in $E$ such that for $i\in [p]$, each example in $E_i$ agrees on every feature in $S$. Moreover, for each $E_i$, $i\in j$, let $E^+_i$ denote the set of positive examples in $E_i$ and $E^-_i$ denote the set of negative examples in $E_i$. Then, we say that $E_i$ has $\min\{|E^+_i| ,|E^-_i|\}$ outliers for $S$, and we denote it by $B(S,E_i) = \min\{|E^+_i| ,|E^-_i|\}$. Finally, we say that $S$ is a \textit{support set with} $B(S,E) = \sum_{i\in [p]}B(S,E_i)$ \textit{outliers} for $E$. Our first easy observation is the following.
\begin{observation}\label{L:supportOurs}
    Let $T$ be a \DT such that $|O(T,E)| \leq t$. Then, $B(F(T),E) \leq t$, that is, $F(T)$ has at most $t$ outliers for $E$.
\end{observation}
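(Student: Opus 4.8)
The plan is to show that the outlier set $O(T,E)$ of the decision tree $T$ can be partitioned according to the leaves of $T$, and that within each leaf the outlier count is bounded below by the quantity $B(\cdot)$ contributed by the corresponding part of the partition induced by $F(T)$. First I would recall that $F(T)$, being the set of features tested in $T$, induces the partition $E_1,\ldots,E_p$ of $E$ into maximal groups agreeing on every feature of $F(T)$. The crucial geometric fact is that any two examples that agree on all features of $F(T)$ must follow exactly the same root-to-leaf path in $T$: every test node $u$ on the path uses a feature $f(u)\in F(T)$, and since the two examples share the same value on $f(u)$, they compare identically against the threshold $\lambda(u)$ and branch the same way. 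Hence each part $E_i$ is mapped entirely into a single leaf of $T$.

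Next I would analyze the outliers contributed by a single part $E_i$ sitting in a leaf $v$. If $v$ is a positive leaf, then every negative example of $E_i$ is misclassified and thus an outlier, contributing $|E_i^-|$ outliers; symmetrically, if $v$ is a negative leaf, the positive examples of $E_i$ are all outliers, contributing $|E_i^+|$. In either case the number of outliers originating from $E_i$ is at least $\min\{|E_i^+|,|E_i^-|\} = B(F(T),E_i)$, since whichever label the leaf carries, the examples of the opposite label in $E_i$ are necessarily outliers and this count is at least the smaller of the two class sizes.

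Finally I would sum over the parts. Because distinct parts $E_i$ may land in the same leaf or in different leaves, but each example of $E$ belongs to exactly one part and is counted as an outlier at most once in $O(T,E)$, the total number of outliers satisfies
\[
|O(T,E)| \;\ge\; \sum_{i\in[p]} B(F(T),E_i) \;=\; B(F(T),E).
\]
Combining this with the hypothesis $|O(T,E)|\le t$ yields $B(F(T),E)\le t$, as desired. The only point requiring care is ensuring the per-part lower bounds may be added without double counting: since the parts $E_1,\ldots,E_p$ are disjoint and the outlier contributions I identify from distinct parts are themselves disjoint subsets of $E$ (each is a set of examples drawn from its own part), the sum of the lower bounds is a valid lower bound on $|O(T,E)|$. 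I expect this bookkeeping of disjointness across parts to be the main (though modest) obstacle; everything else is a direct consequence of the fact that a part agreeing on all of $F(T)$ cannot be split by $T$.
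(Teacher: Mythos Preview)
Your proof is correct and is exactly the natural argument the paper has in mind; the paper states this result as an ``easy observation'' without proof, and your write-up fills in precisely the details one would expect---namely, that each equivalence class $E_i$ of the partition induced by $F(T)$ lands in a single leaf of $T$, so at least $\min\{|E_i^+|,|E_i^-|\}$ of its examples are misclassified, and summing over the disjoint parts gives $|O(T,E)|\ge B(F(T),E)$.
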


Ordyniak and Szeider~\cite{ordyniak} established that we can enumerate all mininal support sets of size at most $k$ in time \FPT in $\delta_{max}$ and $k$. 
\begin{lemma}[\cite{ordyniak}]\label{L:enumerate}
    Let $E$ be a \CI and $k$ be an integer. Then, there is an algorithm that in time $\mathcal{O}(\delta_{max}^k |E|)$ enumerates all (of the at most $\delta_{max}^k$) minimal support sets of size at most $k$ for $E$.
\end{lemma}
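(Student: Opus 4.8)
The plan is to recognize support sets as hitting sets for a family of bounded-size ``disagreement sets'' and then run the standard bounded-search-tree (branching) algorithm, whose branching factor is controlled precisely by $\delta_{max}$.

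First I would set up the hitting-set reformulation. For every non-uniform pair $(e^+,e^-)$ with $e^+\in E^+$ and $e^-\in E^-$, the disagreement set $\lambda(e^+,e^-)$ has cardinality at most $\delta_{max}$ by definition. A set $S\subseteq F$ is a support set exactly when $S\cap\lambda(e^+,e^-)\neq\emptyset$ for every such pair. Hence the support sets of $E$ are precisely the hitting sets of the family $\mathcal{L}=\{\lambda(e^+,e^-) : e^+\in E^+,\, e^-\in E^-\}$, each of whose members has size at most $\delta_{max}$.

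Next I would describe the branching. We maintain a partial solution $S$ (initially empty) together with a remaining budget. If $S$ is already a support set, we record it and close the branch; if the budget is exhausted we abandon the branch; otherwise we locate an unhit pair $(e^+,e^-)$—equivalently, two non-uniform examples that agree on every feature of $S$—and branch into the at most $\delta_{max}$ subcases indexed by the features $f\in\lambda(e^+,e^-)$, adding $f$ to $S$ and decrementing the budget in each. Since the recursion depth is at most $k$ and the branching factor is at most $\delta_{max}$, the search tree has at most $\delta_{max}^k$ leaves; this immediately bounds both the number of recorded sets and (after the correctness argument below) the number of minimal support sets of size at most $k$ by $\delta_{max}^k$. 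For correctness I would prove completeness: given a minimal support set $S^\star$ with $|S^\star|\leq k$, trace a root-to-leaf path by always following, at a branching node with chosen unhit pair $(e^+,e^-)$, some feature of $\lambda(e^+,e^-)\cap S^\star$, which is nonempty because $S^\star$ is itself a support set. This produces an increasing subset of $S^\star$ that must eventually become a support set $S\subseteq S^\star$, and minimality of $S^\star$ forces $S=S^\star$; thus $S^\star$ is recorded. A final linear-time pass deletes any recorded set that strictly contains another, leaving exactly the minimal support sets.

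The main obstacle—and the only place requiring care for the stated running time—is performing the per-node work within $\mathcal{O}(|E|)$ time, i.e.\ deciding whether $S$ is already a support set and, if not, exhibiting an unhit pair. I would handle this by maintaining, incrementally along each branch, the partition of $E$ induced by the examples' projections onto $S$: adding a feature refines the partition in a single scan, $S$ is a support set iff every class is uniform, and any non-uniform class yields an unhit pair (hence a set of $\mathcal{L}$ to branch on). Combining the $\mathcal{O}(|E|)$ work at each of the $\mathcal{O}(\delta_{max}^k)$ nodes gives the claimed $\mathcal{O}(\delta_{max}^k\,|E|)$ bound.
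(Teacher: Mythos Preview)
Your proposal is correct and matches the standard bounded-search-tree argument that the paper relies on: the paper cites this lemma from \cite{ordyniak} without reproving it, but its proof of the generalization (\cref{L:enumerateOurs}) is exactly your branching scheme specialized to $t=0$---pick an undistinguished pair $(\mathsf{p},\mathsf{n})$ and branch over the at most $\delta_{max}$ features in $\lambda(\mathsf{p},\mathsf{n})$. The only cosmetic looseness is the phrase ``linear-time pass'' for the minimality filter; in practice one simply tests each recorded $S$ for minimality directly, which fits within the stated bound.
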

We adapt Lemma~\ref{L:enumerate} to our setting in the following manner. We remark that this part is the most non-trivial extension and the rest of the proof roughly follows the algorithm from~\cite{eibenLargeDomain} with the only change being that instead of checking if $T$ classifies $E$, we check if $|O(T,E)|\leq t$.
\begin{lemma}\label{L:enumerateOurs}
    Let $E$ be a \CI and $k$ be an integer. Then, there is an algorithm that enumerates all (of the at most $\delta_{max}^k(t+1)^{2k}$) minimal  support sets allowing at most $t$ outliers for $E$ and of size at most $k$ in time $\mathcal{O}((\delta_{max}+2)^{k+t})|E|^{\mathcal{O}(1)}$.   
\end{lemma}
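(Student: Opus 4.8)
The plan is to extend the branching procedure behind Lemma~\ref{L:enumerate} so that, at each conflicting pair, it may either resolve the conflict by adding a feature (as in the outlier-free setting) or charge the conflict to the outlier budget. The device that makes this precise is the following reformulation, which I would prove first: a set $S\subseteq F(E)$ has $B(S,E)\le t$ if and only if there is a set $O\subseteq E$ with $|O|\le t$ such that $S$ is an ordinary support set of $E\setminus O$. For the forward direction, take $O$ to be the union over all parts $E_i$ of the partition induced by $S$ of the minority class of $E_i$; deleting these $\sum_i\min\{|E^+_i|,|E^-_i|\}=B(S,E)\le t$ examples makes every part uniform, so $S$ separates $E\setminus O$. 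For the converse, if $S$ separates $E\setminus O$ then the entire minority of each part $E_i$ lies in $O$, so $B(S,E)=\sum_i\min\{|E^+_i|,|E^-_i|\}\le\sum_i|E_i\cap O|=|O|\le t$. This lets me view the $\le t$-outlier support sets as the ordinary support sets of $E\setminus O$ ranging over all deletion sets $O$ of size at most $t$, without ever fixing $O$ in advance.

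With this in hand, I would run the following branching: maintain a partial support set $S'$ and a partial deletion set $O'$, both initially empty. While some pair $e^+\in E^+\setminus O'$, $e^-\in E^-\setminus O'$ is not separated by $S'$, select one such pair and branch in $\delta_{max}+2$ ways: for each of the at most $\delta_{max}$ features $f\in\lambda(e^+,e^-)$ recurse with $S'\cup\{f\}$, and in two additional branches recurse with $O'\cup\{e^+\}$ and with $O'\cup\{e^-\}$; prune once $|S'|>k$ or $|O'|>t$. At a leaf, $S'$ separates $E\setminus O'$ with $|O'|\le t$, so by the equivalence $S'$ is a $\le t$-outlier support set of size at most $k$; I then minimize $S'$ (greedily deleting features while $B(\cdot,E)\le t$ still holds, checkable in polynomial time) and retain the inclusion-minimal sets, deduplicating. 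Since each root-to-leaf path performs at most $k$ feature additions and at most $t$ deletions, the depth is at most $k+t$ and the branching factor at most $\delta_{max}+2$, so there are at most $(\delta_{max}+2)^{k+t}$ leaves and the running time is $\mathcal{O}((\delta_{max}+2)^{k+t})|E|^{\mathcal{O}(1)}$, as each node needs only polynomial time to find an active unseparated pair and to evaluate $B$. For completeness, given a minimal $\le t$-outlier support set $S$ with witness $O$, the guided run that adds a feature of $\lambda(e^+,e^-)\cap S$ whenever the current pair is separated by $S$, and otherwise deletes whichever of $e^+,e^-$ lies in $O$ (one must, since $S$ separates $E\setminus O$), is a valid unpruned path whose output $S'\subseteq S$ is itself a $\le t$-outlier support set; minimality of $S$ then forces $S'=S$.

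The step I expect to be the main obstacle is the sharper bound $\delta_{max}^k(t+1)^{2k}$ on the number of distinct minimal sets, which is far below the leaf count $(\delta_{max}+2)^{k+t}$ when $t$ is large. The difficulty is that the identities of the deleted examples, not merely their number, govern which pairs stay active and hence which features are offered later, so a naive count over leaves does not collapse. My plan is to fix a canonical processing order (always act on the lexicographically first active pair, and break deletion ties by a fixed global order of the examples), group each canonical run into at most $k$ rounds, each consisting of a block of deletions followed by a single feature addition, and encode a produced set by recording, per round, the added feature (one of at most $\delta_{max}$ options) together with the numbers of positive and negative examples deleted in that round (each in $\{0,\dots,t\}$, i.e.\ $(t+1)^2$ options). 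This yields at most $(\delta_{max}(t+1)^2)^k=\delta_{max}^k(t+1)^{2k}$ encodings, and the crux is to prove that this bounded per-round summary, under the canonical order, reconstructs the whole run and hence the set $S$ uniquely; once injectivity of the encoding is established the claimed count follows and, together with the running-time and completeness arguments above, completes the lemma.
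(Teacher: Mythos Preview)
Your branching procedure is essentially the paper's: maintain $S$ and $O$, pick a positive/negative pair in $E\setminus O$ not yet separated by $S$, and branch into the at most $\delta_{max}$ feature extensions plus the two deletions, pruning when $|S|>k$ or $|O|>t$; the depth bound $k+t$ and branching factor $\delta_{max}+2$ give the claimed running time. Your explicit equivalence $B(S,E)\le t \iff \exists\,O,|O|\le t,\ S$ supports $E\setminus O$, the post-hoc minimisation, and the guided-run completeness argument are all additions that make the argument cleaner than the paper's brief justification, but the underlying idea is the same.

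Where you diverge from the paper is the count bound $\delta_{max}^k(t+1)^{2k}$: the paper asserts it in the statement but its proof only establishes the running time, not this separate bound. So your attempt there is not just a different route---it is filling a gap. One caution on your encoding: you record only the \emph{counts} $(a_i,b_i)$ of positive/negative deletions per round, but during the simulation the identity of each deleted example is forced by the current lex-first active pair, and that pair changes with each deletion. Knowing just the totals $(a_i,b_i)$ does not obviously determine the interleaving of $+$ and $-$ deletions within the round, and different interleavings can lead to different active pairs at the moment of feature addition, hence potentially different ``$j$-th feature of $\lambda(e^+,e^-)$''. To make the encoding injective you will likely need either to record the full sign sequence per round (which costs $2^t$ overall and would give $\delta_{max}^k 2^t$ rather than $(t+1)^{2k}$), or to argue that a canonical within-round order (say, all $+$ deletions first) is always realisable by \emph{some} completeness witness for every minimal $S$. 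The latter is plausible but needs an argument; as the paper itself does not supply one, this remains open there too.
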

\begin{proof}
    We have the following branching-based algorithm. We keep two sets $S$ and $O$, representing the solution set and the set of outliers, respectively. Initially, we set $S=\emptyset$ and $O=\emptyset$. When $|S|>k$ or $|O|>t$, we backtrack. Else, if $B(S,E)\leq t$, then we report $S$ and backtrack. Else, pick some $\mathsf{p}\in E^+\setminus O$ and $\mathsf{n} \in E^-\setminus O$ such that $S$ does not distinguish $\mathsf{p}$ and $\mathsf{n}$; consider $\delta(\mathsf{p},\mathsf{n})+2$ branches: for each feature $f\in \delta(\mathsf{p},\mathsf{n})$, consider a branch where $S=S\cup \{f\}$, and consider two additional branches where in one add $\mathsf{p}$ to $O$ and in the other add $\mathsf{n}$ to $\mathsf{O}$. Validity of our algorithm follows as for any $\mathsf{p}\in E^+$ and  $\mathsf{n} \in E^-$, either at least one feature of $\delta_{max}$ should be in (the final) $S$ or at least one of $\mathsf{n}$ and $\mathsf{p}$ should contribute to $B(S,E)$. 
    Each branching step has at most $\delta_{max}+2$ branches, and the branching tree has at most $k+t$ depth. So, the number of leaves in our branching tree is at most  $(\delta_{max}+2)^{k+t}$. Since each branch can be implemented in poly-time, the algorithm takes  $\mathcal{O}((\delta_{max}+2)^{k+t})|E|^{\mathcal{O}(1)}$ time in total. 
\end{proof}

\Cref{L:enumerateOurs} establishes that we can enumerate all minimal support sets of size at most $k$ that allow at most $t$ outliers in \FPT (parameterized by $k+\delta_{max}+t$) time. Then, the remaining ingredient of our algorithm is the notion of maintaining $t$ outliers in each of their (\cite{ordyniak} and ~\cite{eibenLargeDomain}) definitions.  Moreover, in each check, instead of checking if $T$ is a valid \DT for $E$, we check if $|O(T,E)| \leq t$, which increases the dependency on the running time of their procedures by time \FPT in $t$. To avoid repetition, we use some of the terms from their paper in the sketch below without explicitly defining them. Hence, following their algorithm, we can first enumerate all minimal support sets of size at most $k$ ($k=s$ for \DTSO and $k=2^d$ for \DTDO) that allow at most $t$ outliers. Then, for each such support set $S$, we build all possible \DT \textit{patterns} $T$ such that $F(T) = S\cup \{\blacksquare\}$ of size at most $s$ (resp., $2^d$) for \DTSO (resp., \DTDO). Then, for each of these \DT patterns, we compute the \textit{branching sets} $(S,|T|)$. Finally, for each subset $S'$ of $(S,|T|)$ \textit{branching set} we construct the minimum sized (resp., minimum depth) \DT $T'$ with feature set $S' \cup F(T)$. Finally, we have the following theorem.

\FPTT*

\section{Kernelization Complexity}
Observation~\ref{th:HSIncompressible} indicates that it is unlikely for \DTS and \DTD to attain polynomial compressibility parameterized when $\delta_{max}$ is considered to be a part of the input. Since \HS admits a kernel of size at most $k^{\mathcal{O}(\Delta)}$, for example, using sunflower lemma~\cite{bookKernelization}, it becomes an interesting question to determine the kernelization complexity of \DTS and \DTD when $\delta_{max}$ is considered to be a constant (as $\delta_{max}=\Delta$ in the reduction). 

\subsection{Trivial Kernelization wrt $D_{max} + |F|$.}
Here, we establish that \DTS and \DTD admit a trivial polynomial kernel parameterized by $D_{max}+|F|$ when $\delta_{max}$ is  a fixed constant. The proof follows from simple arguments that upper bounds the number of examples in any classification instance by $(|F|D_{max})^{2\delta_{max}}$. 

\begin{lemma}\label{L:trivial}
    A classification instance can contain at most $2{|F|\choose \delta_{max}}(D_{max})^{\delta_{max}}$ examples.
\end{lemma}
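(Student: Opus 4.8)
The plan is to bound $|E^+|$ and $|E^-|$ separately by the same quantity and add the two estimates, which accounts for the factor of $2$. First I would dispose of the degenerate case: if $E^+=\emptyset$ or $E^-=\emptyset$ then the instance is uniform, $\delta_{max}$ carries no constraint, and such an instance is classified by a single leaf; so I assume both $E^+$ and $E^-$ are nonempty. Since $E^+$ and $E^-$ are sets, the examples within each part are pairwise distinct as functions $F\to D$, so it suffices to count distinct functions.

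The core step is a counting argument, which I would carry out for the negative examples (the positive case being symmetric). Fix an arbitrary positive example $p\in E^+$. By the definition of $\delta_{max}$, every negative example $n\in E^-$ satisfies $|\lambda(p,n)|\le \delta_{max}$, i.e.\ $n$ agrees with $p$ on all but at most $\delta_{max}$ features. The key device for obtaining the clean product $\binom{|F|}{\delta_{max}}(D_{max})^{\delta_{max}}$ rather than the naive Hamming-ball sum $\sum_{i\le \delta_{max}}\binom{|F|}{i}(D_{max})^{i}$ is \emph{padding}: I would encode each $n$ by a pair $(S,\alpha)$, where $S\subseteq F$ is a feature set of size exactly $\delta_{max}$ chosen with $\lambda(p,n)\subseteq S$ (padding $\lambda(p,n)$ by arbitrary extra features, which is possible since any two examples differ on at most $|F|$ features and hence $\delta_{max}\le |F|$), and $\alpha=n|_S$ is the restriction of $n$ to $S$. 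Since $n$ agrees with $p$ on every feature outside $\lambda(p,n)\subseteq S$, the pair $(S,\alpha)$ determines $n$ completely; fixing one canonical $S$ per $n$ makes $n\mapsto(S,\alpha)$ injective. There are $\binom{|F|}{\delta_{max}}$ choices for $S$, and for each of the $\delta_{max}$ features $f\in S$ the value $n(f)$ lies in $D_E(f)$, which has at most $D_{max}$ elements, giving at most $(D_{max})^{\delta_{max}}$ choices for $\alpha$. Hence $|E^-|\le \binom{|F|}{\delta_{max}}(D_{max})^{\delta_{max}}$.

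Running the symmetric argument with a fixed negative example $q\in E^-$ bounds $|E^+|$ by the same quantity, and summing yields $|E|=|E^+|+|E^-|\le 2\binom{|F|}{\delta_{max}}(D_{max})^{\delta_{max}}$, as required. I do not anticipate a genuine obstacle: the only point demanding care is the padding step, which converts a radius-indexed sum into a single binomial-times-power product, together with the bookkeeping verifying that padding only ever over-counts and therefore still produces a valid upper bound.
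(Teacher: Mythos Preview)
Your proof is correct and essentially identical to the paper's: both fix a reference example of the opposite class and bound the number of functions within Hamming distance $\delta_{max}$ of it by choosing a size-$\delta_{max}$ set of ``free'' coordinates (your $S$ is exactly the complement of the paper's $S$, which has size $|F|-\delta_{max}$). The paper phrases the count as a cover by ``equivalence classes'' rather than an explicit injection with padding, but the arithmetic and the underlying idea are the same.
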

\begin{proof}
    First, we establish an upper bound on the number of negative examples in $E$. Fix an arbitrary positive example $e$. (Recall that we assume that there is at least one positive and one negative example.) 
    Now, for each $S\subseteq F(E)$ such that $|S|=|F|-\delta_{max}$, let $E_S$ denote the set of negative examples in $E$ such that for each example in $e'\in E_S$ and $s\in S$, $e'(s) = e(s)$. Observe that this partitions the set of negative examples of $E$ into equivalence classes, and the number of these equivalence classes is at most ${|F| \choose |F|-\delta_{max}} = {|F| \choose \delta_{max}}$. Moreover, in each equivalence class $E_S$, each feature $s\in S$ can take a fixed value, and each feature $s'\in F(E)\setminus S$ can take one of the $D_{max}$ values. Hence the total number of distinct examples in $E_S$ can be at most $(D_{max})^{|F|-|S|} = (D_{max})^{\delta_{max}}$. Hence, the total number of negative examples in $E$ is at most $(D_{max})^{\delta_{max}}{|F| \choose \delta_{max}}$.

    Symmetrically, the total number of positive examples in $E$ is at most $(D_{max})^{\delta_{max}}{|F| \choose \delta_{max}}$. Hence, the total number of examples in $E$ is at most $2(D_{max})^{\delta_{max}}{|F| \choose \delta_{max}}$.
\end{proof}

Thus, Lemma~\ref{L:trivial} implies the following theorem. 
\TrivialKernel*

\subsection{Incompressibility for \DTD wrt $d+|F|$}
Next, we establish that even when $\delta_{max} =3$, \DTD parameterized by $d+|F|$ does not admit a polynomial compression, unless \NPoly. To this end, we provide a non-trivial AND-composition for \DTD parameterized by $|F|+d$ such that $\delta_{max} =3$. We will use the following construction.

\begin{figure}
    \centering
    \includegraphics[scale = 0.6]{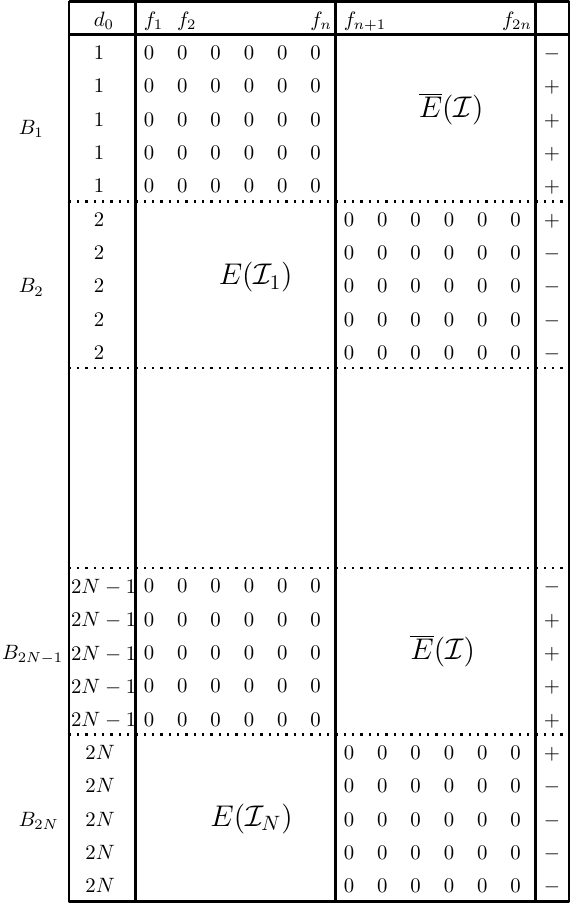}
    \caption{The construction of $E$ from $\overline{E}(\mathcal{I})$ and $E(\mathcal{I}_j)$, for $j\in [N]$.}
    \label{fig:incompressible}
\end{figure}
\medskip
\noindent\textbf{Construction.} An instance $(G,k)$ of \textsc{Vertex Cover} can be modeled as an instance $(\mathcal{F},U,k)$ of \HS with arity at most two such that $U=V(G)$ and $\mathcal{F} = E(G)$. Let $I_1 =(G_1,k),\ldots, I_N = (G_N,k)$ be $N$ instances of \textsc{Vertex Cover} such that for $i,j\in N$, $|V(G_i)|= |V(G_j)| = n$.  Assume WLOG that $V(G_i) = V(G_j)$. Moreover, assume that $N= 2^{\ell}$, for some $\ell \in \mathbb{N}$, as otherwise, we can add some dummy instances to make $N$ a  power of $2$. Furthermore, let $\mathcal{I}_1, \ldots, \mathcal{I}_N$ be the corresponding instances of \HS, each having arity at most 2, and let $E(\mathcal{I}_j)$ be the \CI corresponding to the \HS instance $\mathcal{I}_j$, for $j\in [N]$.  Moreover, let $F(E(\mathcal{I}_j))  = \{f_1,\ldots, f_n\}$.

Now, let $G$ be a fixed graph such that $|V(G)| = n$ and $G$ has a minimum vertex cover of size $k$ which is known to us. One such graph (assuming $n>k$) can be a split graph $G=(C,I)$ such that $|C|=k$ and each vertex in $I$ is connected to each vertex in $C$. Let $\mathcal{I}$ be the corresponding \HS instance of $G$. Consider the \CI $\overline{E}(\mathcal{I})$, and let $F(\overline{E}(\mathcal{I})) = \{f_{n+1}, \ldots, f_{2n}\}$. 

Finally, we create the \CI $E$ as follows. See Figure~\ref{fig:incompressible} for an illustration. Let $F(E) =  \{f_1,\ldots f_{2n}\} \cup \{d_0\}$, i.e., $F(E)$ contains each feature in $F(\overline{E}(\mathcal{I}))$, each feature in $F(E(\mathcal{I}_j))$ for $j\in [N]$, and a \textit{dummy feature} $d_0$. Now, $E$ will contain $2N$ \textit{blocks} of examples $B_1,\ldots, B_{2N}$. For $i\in [2N]$, the examples in $B_i$ will depend on whether $i$ is odd or even as follows. 
\begin{enumerate}
    \item $i$ is odd. In this case, for each positive/negative example $e'$ in $\overline{E}(\mathcal{I})$, we add a positive/negative example $e$ to $B_i$ as follows. First, set $e(d_0) = i$. Second, for $j\in [n]$, set $e(f_j) = 0$. Finally, for $n+1\leq j \leq 2n$, set $e(f_j) =e'(f_j)$. Observe that when restricted to the features in $\{f_{n+1},\ldots, f_{2n}\}$, $B_i$ is identical to $\overline{E}(\mathcal{I})$, and for every other feature of $E$, the examples in $B_i$ have identical values. Hence, the examples in $B_i$ can be classified using a \DT of size (resp., depth) at most $k$, and cannot be classified using a \DT of size (resp., depth) less than $k$. 

    \item $i$ is even. Let $p = \frac{i}{2}$. In this case, for each positive/negative example $e'$ of $E(\mathcal{I}_p)$, we add a positive/negative example $e$ to $B_i$ in the following manner. First, set $e(d_0) = i$. Second, for $j\in [n]$, set $e(f_j) = e'(f_j)$. Finally, for $n+1\leq j \leq 2n$, set $e(f_j) = 0$. Similarly to the previous case, observe that when restricted to the features in $\{f_1,\ldots, f_n\}$, $B_i$ is identical to $E(\mathcal{I}_p)$, and for every other feature, the examples in $B_i$ have identical values. Hence, examples in $B_i$ can be classified using a \DT of depth (resp., size)  $\ell$ iff $E(\mathcal{I}_p)$ can be classified using a \DT of depth (resp., size) $\ell$.
\end{enumerate}

Finally, let $p = \log_2 N+1$ and $d = p+k = \log_2 N+k+1$. This completes our construction. Observe that in each block $B_i$, for $i\in [2N]$, there is exactly one example $e$ such that for every vertex feature $f_j$, $j\in [2n]$, $e(f_j) = 0$. Moreover, $e$ is a positive example if $i$ is odd, and a negative example if $i$ is even. We refer to such examples as \textit{dummy examples}. All other examples are said to be \textit{edge examples}. We have the following observations about our \CI $E$. 

\begin{observation}\label{OI:trivial}
  Let each $I_j$, for $j\in [N]$, be a Yes-instance of \textsc{Vertex Cover}. Then, the examples of block $B_i$, for $i\in [2N]$, can be classified using a \DT $T_i$ of depth at most $k$. Moreover, when $i$ is odd, the examples of $B_i$ cannot be classified using a \DT of depth less than $k$.   
\end{observation}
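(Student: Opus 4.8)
The plan is to treat the odd and even blocks separately, in both cases reducing the classification of $B_i$ to the classification of a single small \CI (namely $\overline{E}(\mathcal{I})$ or $E(\mathcal{I}_p)$) by exploiting that all features outside the relevant feature-half are constant on $B_i$, and then to invoke the quoted Proposition of Ordyniak and Szeider relating hitting sets to \DT depth.

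First I would record the reduction step already implicit in the construction: every example of $B_i$ shares the same value on $d_0$ and on one entire feature-half ($\{f_1,\ldots,f_n\}$ when $i$ is odd, and $\{f_{n+1},\ldots,f_{2n}\}$ when $i$ is even). Testing any such constant feature sends the whole block to a single child and leaves the sibling empty, so such a test can be pruned without increasing the depth. Consequently a minimum-depth \DT for $B_i$ coincides with a minimum-depth \DT for the \CI obtained by restricting $B_i$ to its non-constant feature-half; in particular $B_i$ admits a \DT of depth $\ell$ iff $\overline{E}(\mathcal{I})$ does (for odd $i$) or iff $E(\mathcal{I}_p)$ with $p=i/2$ does (for even $i$).

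For the even blocks I would then apply the depth direction of the Proposition directly. Since $\mathcal{I}_p$ is the \HS encoding of the \textsc{Vertex Cover} instance $I_p=(G_p,k)$, a hitting set of $\mathcal{I}_p$ is exactly a vertex cover of $G_p$, so the assumption that $I_p$ is a Yes-instance yields a hitting set of size at most $k$, whence $E(\mathcal{I}_p)$---and therefore $B_i$---admits a \DT of depth at most $k$. For the odd blocks I would combine the same Proposition with the stated equivalence that $E(\mathcal{I})$ and $\overline{E}(\mathcal{I})$ have the same minimum \DT depth, reducing everything to pinning down the minimum hitting-set size of the fixed split graph $G$. Here I would argue that any vertex cover of $G$ must contain at least $k-1$ vertices of the clique $C$, and that dropping a clique vertex forces all of $I$ into the cover, giving size at least $(k-1)+(n-k)=n-1\ge k$ under the standing assumption $n>k$; hence the minimum vertex cover, and so the minimum hitting set of $\mathcal{I}$, equals exactly $k$. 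Therefore $\overline{E}(\mathcal{I})$ has minimum \DT depth exactly $k$: the upper bound supplies the classifying tree $T_i$, while the matching lower bound delivers the ``cannot be classified with depth less than $k$'' clause for odd $i$.

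The only genuinely non-routine point is this tightness of the minimum vertex cover of the fixed graph $G$ (equivalently, that $\overline{E}(\mathcal{I})$ cannot be classified with depth below $k$), since every other part follows by assembling the construction's built-in equivalences with the quoted Proposition. I would therefore spend most of the care verifying that the split-graph choice forces minimum vertex cover exactly $k$, as this is what makes the odd blocks into rigid ``depth-$k$'' gadgets that the later AND-composition argument will rely on.
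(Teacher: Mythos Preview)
Your proposal is correct and follows essentially the same reasoning the paper builds into its construction: the paper notes inline that restricting $B_i$ to its non-constant feature-half makes it identical to $\overline{E}(\mathcal{I})$ (odd $i$) or $E(\mathcal{I}_p)$ (even $i$), and then invokes the Ordyniak--Szeider Proposition together with the stipulated property that the fixed graph $G$ has minimum vertex cover exactly~$k$. The only place you go beyond the paper is in actually verifying that the concrete split graph $G=(C,I)$ has minimum vertex cover exactly $k$ (the paper simply asserts this and offers the split graph as one witness), and your case split---all of $C$ in the cover versus one clique vertex missing forcing $C\setminus\{c\}\cup I$ into the cover---is the natural and correct argument for that.
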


Let $T$ be a \DT for $E$ that classifies $E$. Then, a test node $v$ of $T$ is said to be a \textit{dummy test node} if $f(v) = \{d_0\}$ and is said to be a \textit{vertex test node}, otherwise. Moreover, similar to the reduction in W[1]-hardness proofs, we assume that for each test node $v$ with children $l$ and $r$, $T_E(v)$ is non-uniform and $T_E(l) \neq \emptyset$ and $T_E(r) \neq \emptyset$. We discuss the effect of these test nodes below.

\medskip
\noindent\textbf{Dummy test node.} Let $v$ be a dummy test node of $T$, and let $l$ and $r$ be the left and right child of $v$, respectively. Then, $f(v)= d_0$ and $\lambda(v) \in [2N]$. Observe that for each example $e\in T_E(v)$, if $e(d_0) \leq \lambda(v)$, then $e\in T_e(l)$, else, $e\in T_e(r)$. Moreover, since the value of $d_0$ is the same for each example of a block $B$, all examples of $B$ that are in $T_E(v)$ will end up in either $T_E(l)$ or $T_E(r)$, i.e. either $T_E(v)\cap B = T_E(l) \cap B$ or $T_E(v) \cap B = T_E(r) \cap B$. Hence, we have the following.

\begin{observation}\label{O:dummyTest}
    Let $v_0$ be the root of a \DT $T$ for $E$. Moreover, let $v$ be a dummy test node of $T$ with $l$ and $r$ as the left and right child of $v$, respectively. Furthermore, let every node on the unique $(v_0,v)$-path be a dummy test node. Then, if $T_E(l)$ (resp., $T_E(r)$) contains some example of a block $B$, then $T_E(l)$ (resp., $T_E(r)$) contains every example of the block $B$.
\end{observation}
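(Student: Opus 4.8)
The plan is to prove Observation~\ref{O:dummyTest} directly from the structure of the dummy feature $d_0$, essentially as an immediate consequence of the discussion preceding it. The key point I would exploit is that within any single block $B$, every example shares the same value of $d_0$ (namely $d_0 = i$ for block $B_i$), so a dummy test node can never separate two examples of the same block. First I would set up notation: let $P = v_0, v_1, \ldots, v_m = v$ be the unique $(v_0,v)$-path, where by hypothesis every $v_j$ is a dummy test node, so $f(v_j) = d_0$ and $\lambda(v_j) \in [2N]$ for each $j$.

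The core argument proceeds by induction along $P$, maintaining the invariant that for every node $v_j$ on the path and every block $B$, the set $T_E(v_j)$ either contains all examples of $B$ or none of them. The base case is $v_0$, where $T_E(v_0) = E$ contains every block in full. For the inductive step, suppose $T_E(v_j) \cap B$ is either $B$ or $\emptyset$ for each block $B$; I would then apply the observation already established in the paragraph preceding the statement, namely that since all examples of $B$ have the same $d_0$-value, they all satisfy $e(d_0) \leq \lambda(v_j)$ simultaneously or all satisfy $e(d_0) > \lambda(v_j)$ simultaneously. Hence the whole of $B \cap T_E(v_j)$ is routed to a single child, giving $T_E(v_{j+1}) \cap B \in \{B \cap T_E(v_j), \emptyset\}$, which is again either $B$ or $\emptyset$ by the inductive hypothesis. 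Propagating this to $v = v_m$ shows $T_E(v) \cap B \in \{B, \emptyset\}$, and one more application of the dummy-test-node splitting at $v$ itself yields the same dichotomy for $l$ and $r$. In particular, if $T_E(l)$ (resp.\ $T_E(r)$) contains \emph{some} example of $B$, then $T_E(v) \cap B = B$ and the entire block is routed to $l$ (resp.\ $r$), so $T_E(l)$ (resp.\ $T_E(r)$) contains \emph{every} example of $B$.

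I do not anticipate a genuine obstacle here, as the statement is structurally a clean consequence of the fact that $d_0$ is constant on each block; the only thing to be careful about is the bookkeeping of the induction invariant (that a block is never partially present at a node along the all-dummy prefix), since this is precisely what guarantees that no \emph{earlier} dummy test could have already split $B$. I would emphasize in the write-up that the hypothesis ``every node on the $(v_0,v)$-path is a dummy test node'' is exactly what prevents a vertex test node from fragmenting a block before we reach $v$, which is why the conclusion holds for $l$ and $r$ and not merely in some weaker averaged sense.
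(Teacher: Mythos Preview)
Your proposal is correct and follows exactly the line the paper intends: the paper does not give an explicit proof of this observation, treating it as an immediate consequence of the preceding paragraph (that a dummy test node cannot separate examples sharing the same $d_0$-value, hence cannot split a block). Your induction along the root-to-$v$ path is simply a careful formalization of that one-line justification, so the approaches coincide.
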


\medskip
\noindent\textbf{Vertex test node.} Let $v$ be a vertex test node of $T$ with $l$ and $r$ as the left and the right child of $v$, respectively. Then, $f(v) \in \{f_1, \ldots, f_{2n}\}$ and $\lambda(v) = 0$ (as otherwise, either $T_E(l) = \emptyset$ or $T_E(r) = \emptyset$). Moreover, if $f(v)\in \{f_1,\ldots, f_n\}$, then $r$ is a negative leaf; else, $r$ is a positive leaf.
Hence, we have the following. 
\begin{observation}\label{OI:1}
    Let $v$ be a vertex test node in a \DT $T$ for $E$. Moreover, let $l$ and $r$ be the left and right child of $v$, respectively. Then, $\lambda(v)=0$ and $r$ is a negative (resp., positive) leaf if $f(v)\in \{f_1,\ldots, f_n\}$ (resp., $f(v) \in \{f_{n+1,\ldots, f_{2n}}\}$). 
\end{observation}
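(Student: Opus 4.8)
The plan is to mirror the argument of Observation~\ref{O:parsimonious}, specializing it to the Boolean vertex features of $E$. First I would record that every vertex feature $f_j$, $j\in[2n]$, takes only values in $\{0,1\}$ across all examples of $E$: in the construction each $f_j$ is either copied verbatim from a Boolean instance (from $E(\mathcal{I}_p)$ for $j\in[n]$ in even blocks, or from $\overline{E}(\mathcal{I})$ for $j\in\{n+1,\dots,2n\}$ in odd blocks) or else explicitly set to $0$. Working under the parsimonious assumption that $T_E(l)\neq\emptyset$ and $T_E(r)\neq\emptyset$, a threshold $\lambda(v)\geq 1$ would send every example of $T_E(v)$ to the left and force $T_E(r)=\emptyset$, while $\lambda(v)<0$ would force $T_E(l)=\emptyset$; since $\lambda(v)$ is an integer, this pins down $\lambda(v)=0$. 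Consequently the examples routed to $r$ are exactly those in $T_E(v)$ with $f(v)=1$.

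Next I would determine, by a short case analysis, the label of the examples that can have $f(v)=1$. If $f(v)=f_j$ with $j\in[n]$, then by construction every example of an odd block has $f_j=0$, so the examples reaching $r$ lie only in even blocks $B_{2p}$, where they are copies of $E(\mathcal{I}_p)$; there the unique positive (dummy) example has all vertex features $0$, whereas an example has $f_j=1$ only when it is a negative edge example incident to vertex $j$. Hence $T_E(r)$ consists entirely of negative examples. Symmetrically, if $f(v)=f_j$ with $j\in\{n+1,\dots,2n\}$, every even block has $f_j=0$, so the examples reaching $r$ lie only in odd blocks, where they are copies of the \emph{complemented} instance $\overline{E}(\mathcal{I})$; there the value-$1$ examples are precisely the (now positive) examples $n_X$ incident to vertex $j$, so $T_E(r)$ consists entirely of positive examples.

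To finish, in each case $T_E(r)$ is non-empty (parsimony) and uniform, so if $r$ were a test node its set $T_E(r)$ would be non-uniform, contradicting our standing assumption on $T$; thus $r$ is a leaf, and since $T$ classifies $E$ it must carry the common label of its examples, i.e.\ a negative leaf when $f(v)\in\{f_1,\dots,f_n\}$ and a positive leaf when $f(v)\in\{f_{n+1},\dots,f_{2n}\}$. The only point requiring genuine care is the bookkeeping across the two feature groups together with the label flip built into $\overline{E}(\mathcal{I})$: it is exactly this flip that reverses the polarity of the right child between the two cases, and I would flag it explicitly so that it is not conflated with the single-polarity situation of Observation~\ref{O:parsimonious}. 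Beyond that, the statement is a direct Boolean specialization of the earlier observation.
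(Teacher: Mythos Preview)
Your proposal is correct and follows essentially the same approach as the paper: the paper's justification for this observation is a terse two-line remark preceding the statement (noting that $\lambda(v)=0$ since otherwise one child is empty, and asserting without detail that $r$ is a negative or positive leaf according to whether $f(v)\in\{f_1,\dots,f_n\}$ or $f(v)\in\{f_{n+1},\dots,f_{2n}\}$). Your write-up expands exactly this argument, supplying the missing case analysis on the two feature groups and making explicit the role of the label flip in $\overline{E}(\mathcal{I})$ as well as why $r$ must be a leaf rather than a test node; this is precisely the reasoning the paper leaves implicit.
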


Now, we have the following lemma, which proves one side of our reduction.

\begin{lemma}\label{LI:oneSide}
    If each $I_j$, for $j\in [N]$, is a Yes-instance of \textsc{Vertex Cover}, then $(E,d)$ is a Yes-instance of \DTD.
\end{lemma}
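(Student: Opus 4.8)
The plan is to exhibit an explicit decision tree $T$ for $E$ of depth at most $d = \log_2 N + k + 1$, assembled in two stages: a top \emph{selector} built entirely from tests on the dummy feature $d_0$, which isolates each block, and a bottom part consisting of the per-block classifiers guaranteed under our hypothesis.

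First I would invoke Observation~\ref{OI:trivial}: since every $I_j$ is a Yes-instance of \textsc{Vertex Cover}, each block $B_i$, $i\in[2N]$, admits a \DT $T_i$ of depth at most $k$ that classifies $B_i$. (For odd $i$ this rests on the fixed graph $G$ whose minimum vertex cover of size $k$ is known, so $\overline{E}(\mathcal{I})$ --- and hence $B_i$ restricted to $\{f_{n+1},\dots,f_{2n}\}$ --- is classifiable in depth $k$; for even $i$ the instance $\mathcal{I}_{i/2}$ has a hitting set of size $k$ because $I_{i/2}$ is a Yes-instance, so $E(\mathcal{I}_{i/2})$, and hence $B_i$ restricted to $\{f_1,\dots,f_n\}$, is classifiable in depth $k$.)

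Next I would build the selector. Because $N=2^\ell$, there are $2N = 2^{\ell+1}$ blocks, and every example of $B_i$ shares the single value $d_0 = i$; thus a test on $d_0$ routes an entire block to one child (cf.\ Observation~\ref{O:dummyTest}). I therefore form a complete, balanced binary search tree of depth $\ell+1 = \log_2 N + 1$ whose every internal node tests $d_0$ against a suitable threshold, arranged so that each of its $2N$ leaf positions receives exactly one block $B_i$. This is the routine balanced-binary-search-tree construction on the $2N$ sorted values $1,\dots,2N$.

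Finally I would obtain $T$ by grafting, at the leaf position that receives $B_i$, the subtree $T_i$ from the first step. Since the selector queries only $d_0$ --- a feature used by no $T_i$ --- and separates the blocks perfectly, the set of examples arriving at each $T_i$ is exactly $B_i$, which $T_i$ classifies correctly; hence $T$ classifies all of $E$, so $(E,d)$ is a Yes-instance of \DTD. The depth is controlled by summing the two stages, $dep(T) \le (\log_2 N + 1) + k = p + k = d$. The only delicate point is this depth accounting --- verifying that the selector is balanced of depth exactly $\log_2 N + 1$, which is immediate once $2N$ is recognized as a power of two --- so, as expected for the easy direction of the composition, no genuine obstacle arises.
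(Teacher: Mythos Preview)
Your proposal is correct and follows essentially the same approach as the paper: build a complete binary tree of depth $p=\log_2 N+1$ on the dummy feature $d_0$ to route each block $B_i$ to a unique leaf, then graft the depth-$k$ classifier $T_i$ (from Observation~\ref{OI:trivial}) at that leaf, yielding total depth $p+k=d$. The paper's presentation is only slightly more explicit about the threshold values in the selector, but the construction and depth accounting are the same.
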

\begin{proof}
    Let each $I_j = (G_j,k)$ be a Yes-instance of \textsc{Vertex Cover}. Then, we have that each instance $\mathcal{I}_j$, for $j\in [N]$, is a Yes-instance of \HS, and hence, due to Proposition~\ref{P:HS}, each $(E(\mathcal{I}_j),k)$, for $j\in [N]$, is a Yes-instance of \DTD. Similarly, since $(G,k)$ is a Yes-instance of \textsc{Vertex Cover}, we have that $(\mathcal{I},k)$ is a Yes-instance of \HS, and hence, due to Proposition~\ref{P:HS}, $(\overline{E}(\mathcal{I}),k)$ is a Yes-instance of \DTD. 

    Now, consider a \DT $T'$ such that $T'$ is a complete binary tree of depth $p$ (recall that $p=\log_2N+1$), and each test node of $T'$ is a dummy test node. The thresholds of the test nodes of $T'$ are defined in the following top-down manner. For root $v_0$ of $T'$, $\lambda(v_0) = \frac{\max(T'_E(v_0)) -\min(T'_E(v_0)) +1}{2} = N$. Now, let $v$ be a test node of $T'$ with parent $p$ such that $\lambda(p)$ is known. Then, $\lambda(v) = \frac{\max(T'_E(v)) -\min(T'_E(v)) +1}{2}$. Observe that $T'$ has $2^{p}=2N$ leaves. Let these leaves be $l_1,\ldots, l_{2N}$ in the left to right ordering. It is not difficult to see that $T'_E(l_i)= B_i$, for $i\in [2N]$. Now, since each of $B_i$ can be classified using a \DT $T_i$ of depth at most $k$ (Observation~\ref{OI:trivial}), we replace each leaf $l_i$ of $T'$ with $T_i$ to get a \DT $T$ that classifies $E$. Clearly, $dep(D) \leq p+k=d$. This completes our proof. 
\end{proof}

In the following, we prove that if $(E,d)$ is a Yes-instance of \DTD, then $I_j$, for $j\in [N]$, is a Yes-instance of \textsc{Vertex Cover}. Let $v$ and $v'$ be test nodes of \DT{s} $T$ and $T'$, respectively (possibly, $T=T'$ and $v = v'$). Then, we say that $v\sim v'$ if $f(v)= f(v')$ and $\lambda(v) = \lambda(v')$. We have the following straightforward observation.
\begin{observation}\label{O:subset}
    Let $u$ and $v$ (resp., $u'$ and $v'$) be the nodes of a \DT $T$ (resp., $T'$) such that 
    \begin{enumerate}
        \item $u$ (resp., $u'$) is an ancestor of $v$ (resp., $v'$) in $T$ (resp., $T'$),
        \item $T'_E(u') \subseteq T_E(u)$, and
        \item for each vertex $x$ on the unique $(u,v)$-path in $T$, there is a vertex $y$ on the unique $(u',v')$-path in $T'$ such that $x\sim y$.
    \end{enumerate}
    Then, $T'_E(v') \subseteq T_E(v)$. 
\end{observation}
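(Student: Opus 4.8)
The plan is to use the characterization of $T_E(\cdot)$ as the set of examples that satisfy the conjunction of test constraints read off a root-to-node path, and to show that the constraints defining $T'_E(v')$ are at least as restrictive as those defining $T_E(v)$. Concretely, for a node $w$ of a \DT, membership $e\in T_E(w)$ is equivalent to requiring that for every arc on the root-to-$w$ path leaving a node $x$, the example satisfies $e(f(x))\le \lambda(x)$ if that arc is a left arc and $e(f(x))>\lambda(x)$ if it is a right arc. Since $u$ is an ancestor of $v$ (condition~1), reaching $v$ factors as ``reaching $u$'' together with ``traversing the $(u,v)$-path''; hence it suffices to take an arbitrary $e\in T'_E(v')$ and verify (i) that $e\in T_E(u)$, and (ii) that $e$ satisfies every inequality imposed along the $(u,v)$-path in $T$.

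For (i), I would note that $v'$ is a descendant of $u'$ by condition~1, so $T'_E(v')\subseteq T'_E(u')$, and then invoke condition~2 to obtain $T'_E(u')\subseteq T_E(u)$; thus $e\in T_E(u)$, i.e.\ $e$ already meets every constraint needed to reach $u$ in $T$. For (ii), I would consider each test node $x$ on the $(u,v)$-path together with the arc it takes toward $v$, which imposes a single inequality on $e(f(x))$ relative to $\lambda(x)$. By condition~3 there is a node $y$ on the $(u',v')$-path with $x\sim y$, that is $f(y)=f(x)$ and $\lambda(y)=\lambda(x)$. Because $e\in T'_E(v')$, the example $e$ satisfies the inequality imposed by the arc leaving $y$ toward $v'$; as this inequality concerns the same feature and the same threshold as the one at $x$, it coincides with the constraint at $x$, so $e$ satisfies that constraint as well. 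Ranging over all $x$ on the $(u,v)$-path shows that $e$ meets every $(u,v)$-path constraint, which together with (i) gives $e\in T_E(v)$, as required.

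The step I expect to be the main (and essentially only) delicate point is the matching inside (ii): the relation $\sim$ records only the feature and the threshold, so I must guarantee that the arc leaving $y$ toward $v'$ imposes the \emph{same} inequality (both ``$\le$'' or both ``$>$'') as the arc leaving $x$ toward $v$, and not the opposite one. I would establish this from the way the observation is applied in the construction, where the two paths are traversed in corresponding directions, so that matched test nodes branch the same way and their inequalities agree; equivalently, one reads condition~3 as matching branch \emph{outcomes}, not merely tests.

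A clean alternative presentation would be an induction on the length of the $(u,v)$-path: peel off the topmost test node $x$ at $u$, apply the single-node containment argument to pass from $(u,u')$ to $(x^\ast,y^\ast)$, where $x^\ast$ and $y^\ast$ are the children of $x$ and of the matched node $y$ on the respective branches, and recurse with the shorter $(x^\ast,v)$-path. This isolates the direction-matching check to one node at a time and makes the base case ($v=u$, where $T'_E(v')\subseteq T'_E(u')\subseteq T_E(u)=T_E(v)$) transparent.
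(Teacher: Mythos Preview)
The paper offers no proof of this observation; it is asserted as ``straightforward'' and used immediately in Claim~\ref{CI:1}. Your argument structure---split membership in $T_E(v)$ into ``reach $u$'' (handled by conditions~1 and~2 via $T'_E(v')\subseteq T'_E(u')\subseteq T_E(u)$) and ``satisfy the $(u,v)$-path constraints'' (handled by condition~3)---is exactly the intended reading, and your inductive alternative is a clean way to present it.

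You are also right to flag the direction-matching issue, and it is a genuine gap in the statement as written: $\sim$ only equates feature and threshold, so condition~3 alone does not force the arc leaving $y$ toward $v'$ to be the same (left/right) as the arc leaving $x$ toward $v$. A two-node counterexample with $u\sim u'$ but $v$ the left child of $u$ and $v'$ the right child of $u'$ already breaks the conclusion. Your proposed fix---reading condition~3 as matching branch \emph{outcomes}, not merely tests---is the correct repair, and it is satisfied in both invocations inside Claim~\ref{CI:1}: for the leaves $l'\notin\{r_1,\dots,r_\ell\}$ the $(v_0,l')$- and $(v''_0,l')$-paths traverse identical arcs except that the single left arc out of $u$ in $T$ is replaced by the left arc out of the matching $u_i$ in $T''$; for $l'=r_i$ the right arc out of $u$ is matched by the right arc out of $u_i$. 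So your diagnosis is accurate: the observation is used correctly, but as stated it tacitly assumes the branch directions agree, which should be made explicit.
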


Next, we have the following crucial lemma.
\begin{lemma}\label{LI:toBottom}
    Let $T'$ be a \DT that classifies $E$. Then, there is a \DT $T$  that classifies $E$,  $dep(T) \leq dep(T')$, and none of the dummy test nodes in $T'$ has a vertex test node as an ancestor.  
\end{lemma}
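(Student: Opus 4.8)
The plan is to prove the lemma by a sequence of local \emph{swaps}, each of which takes a vertex test node that is the \emph{parent} of a dummy test node and interchanges their roles, pushing the dummy test one level up. I will show that a single swap preserves classification, does not increase the depth, and strictly decreases a potential that counts the offending ancestor relations; iterating until the potential vanishes yields the desired tree $T$. First I describe the swap. Let $u$ be a vertex test node whose child is a dummy test node. By Observation~\ref{OI:1}, $\lambda(u)=0$ and the right child $r$ of $u$ is a leaf (negative if $f(u)\in\{f_1,\ldots,f_n\}$, positive otherwise); hence the dummy test node, say $v$, is necessarily the left child of $u$. Write $f(u)=f_a$ and $\lambda(v)=c$, and let $T_{l_1},T_{l_2}$ be the subtrees hanging at the left and right child of $v$. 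I replace the subtree rooted at $u$ by a dummy test node $v'$ with $f(v')=d_0$, $\lambda(v')=c$, whose children are two fresh copies $u_1,u_2$ of $u$ (so $f(u_i)=f_a$, $\lambda(u_i)=0$); the left child of $u_1$ is $T_{l_1}$ and its right child is a leaf of the same polarity as $r$, and symmetrically the left child of $u_2$ is $T_{l_2}$ with a same-polarity leaf as its right child.

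Next I verify correctness and depth. Let $S=T_E(u)$ and $A=\{e\in S:e(f_a)>0\}$; by Observation~\ref{OI:1} every example of $A$ reaches $r$, so $A$ is uniform of the polarity of $r$. A direct check shows that after the swap $T_{l_1}$ receives exactly $\{e\in S: e(f_a)=0,\ e(d_0)\le c\}$ and $T_{l_2}$ receives exactly $\{e\in S: e(f_a)=0,\ e(d_0)>c\}$, which are precisely the sets they received before; meanwhile every example of $A$ reaches a leaf of the polarity of $r$. Hence the new tree still classifies $E$. For depth, the longest root-to-leaf paths through the modified region now have the form $v',u_i,(\text{path in }T_{l_i})$, giving depth $2+\max(dep(T_{l_1}),dep(T_{l_2}))$, exactly as before, where the paths had the form $u,v,(\text{path in }T_{l_i})$. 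Thus the swap does not increase $dep$. (If a newly created leaf receives no example, I may prune the corresponding vertex test node afterwards, which only decreases depth and introduces no new ancestor relations.)

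Finally I set up the termination argument. Define $\Phi(T)=\sum_{w} a(w)$, summing over all dummy test nodes $w$, where $a(w)$ is the number of vertex test nodes that are ancestors of $w$. The part of the tree strictly above $u$ is untouched, and each dummy test node inside $T_{l_1}$ (resp.\ $T_{l_2}$) merely trades the vertex ancestor $u$ for $u_1$ (resp.\ $u_2$), keeping $a(\cdot)$ unchanged; the only net change is that $v$ loses $u$ as an ancestor, so $\Phi$ drops by exactly $1$ per swap. It remains to observe that whenever $\Phi(T)>0$ a swap is applicable: if a dummy test node $w$ has a vertex ancestor, then on the root-to-$w$ path let $n_i$ be the deepest vertex test node; all nodes below $n_i$ on this path are dummy test nodes, so its path-child $n_{i+1}$ is a dummy test node and $(n_i,n_{i+1})$ is a valid swap pair. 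Starting from $T'$ and applying swaps while $\Phi>0$ therefore terminates after finitely many steps in a tree $T$ with $\Phi(T)=0$ that classifies $E$ and satisfies $dep(T)\le dep(T')$; since $\Phi(T)=0$ means no dummy test node has a vertex test ancestor, this is the required tree.

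The main obstacle will be getting the swap exactly right: ensuring that the two ``continuing'' subtrees $T_{l_1},T_{l_2}$ receive unchanged example sets while the peeled-off uniform set $A$ is redistributed among same-polarity leaves, and checking that the necessary duplication of the vertex test into $u_1,u_2$ leaves the depth unchanged rather than increasing it. The potential bookkeeping is then routine, since the duplication preserves the vertex-ancestor count of every descendant dummy node.
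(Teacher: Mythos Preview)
Your argument is correct but follows a genuinely different route from the paper's. You repeatedly swap a vertex test node $u$ with its dummy-test \emph{child} $v$, duplicating $u$ into two copies $u_1,u_2$ below the lifted dummy node, and terminate via the refined potential $\Phi(T)=\sum_w a(w)$. The paper instead selects any vertex test node $u$ that has some dummy descendant and, in a single operation, pushes $u$ all the way down to the leaves of its left subtree $T_l$: each leaf $l_i$ of $T_l$ is replaced by a copy $u_i$ of $u$ with $l_i$ as its left child and a fresh leaf of the polarity of $r$ as its right child, and then $T_u$ is replaced by this modified $T_l$. Their potential is simply the number of vertex test nodes possessing a dummy descendant, which drops by one per operation because the new copies $u_i$ sit directly above leaves and hence have no test-node descendants; this is packaged as a minimal-counterexample contradiction. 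Your adjacent-swap approach is more local and only requires analysing a two-node configuration, at the price of a finer potential and an explicit iteration; the paper's one-shot push avoids any internal bookkeeping in the subtree and fits a clean minimality template, at the price of verifying correctness across all leaves of $T_l$ simultaneously (which they do via Observation~\ref{O:subset}).
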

\begin{proof}
    Let $T$ be a \DT  that classifies $E$, $dep(T) \leq dep(T')$, and $T$ minimizes the number of vertex test nodes that have some dummy test node as a descendent. We claim that no dummy test node of $T$ has a vertex test node as an ancestor. Targeting contradiction, let there be a dummy test node $v$ in $T$ that has a vertex test node $u$ as an ancestor. Due to Observation~\ref{OI:1}, we know that the right child $r$ of $u$ is a leaf and let $l$ be the left child of $u$. Now, consider the tree $T_l$. Let $l_1,\ldots, l_\ell$ be the leaf nodes in $T_l$.  We construct the tree $T''$ from $T$ as follows (see Figure~\ref{fig:DTD}).

    First, we construct $T''_l$ from $T_l$ by replacing each leaf $l_i$, for $i\in [\ell]$, by a vertex test node $u_i$ such that $f(u_i) = f(u)$, $\lambda(u_i) = \lambda(u)=0$, $l_i$ is the left child of $u_i$ and add a right child $r_i$ of $u_i$. Now, we replace $T_u$ in $T$ with $T''_l$ to get the \DT $T''$. More formally, if $u$ is the root of $T$, then $l$ is the root of $T''$. Else, if $u$ is the left (resp., right) child of its parent $p$ in $T$, then $l$ is the left (resp., right) child of $p$ in $T''$. It is easy to see that $T''$ has the same depth as $T$ (since $T_u$ and $T''_l$ have the same depth and the rest of the tree is the same). It remains to prove that $T''$ classifies $E$. Observe that except for the leaf $r$ of $T$, every leaf of $T$ corresponds to a unique leaf in $T''$. Moreover, the leaf $r$ of $T$ corresponds to exactly $\ell$ leaves $r_1, \ldots, r_\ell$ of $T''$. Now, we have the following claim.
      \begin{claim}\label{CI:1}
          $T''$ classifies $E$.
      \end{claim}
      \begin{proofofclaim}
        Observe that $T_E(u) = T''_E(l)$. Targeting contradiction, suppose that $T''$ does not classify $E$. Then, there is a leaf $l'$ in $T''$ such that $l'$ is a positive (resp., negative) leaf, and there is a negative (resp., positive) example $e$ such that $e\in T''_E(l')$. Let $v_0$ and $v''_0$ be the root nodes of $T$ and $T''$, respectively.

        First, let $l'\notin \{ r_1, \ldots, r_\ell\}$. Then, recall that there is a unique leaf $l'$ in $T$ as well that corresponds to $l'$ in $T''$. Moreover, observe that for every test node $y$ on the unique $(v_0,l')$-path in $T$, there is a test node $y''$ on the unique $(v''_0,l')$-path in $T''$ such that $y \sim y''$ and $T_E(v_0) = T''_E(v''_0) =E$. Therefore, due to Observation~\ref{O:subset}, we have that $T''_E(l') \subseteq T_E(l')$, and hence $e\in T_E(l')$. Therefore, $T$ also misclassifies $e$, a contradiction to the fact that $T$ classifies $E$.

        Finally, let $l' =r_i$ for some $i\in [\ell]$. Similarly to the above case, we can establish here as well that $T''_E(r_i) \subseteq T_E(r)$ (since for every test node $y$ on the unique $(v_0,r)$-path in $T$, there is a test node $y''$ on the unique $(v''_0,r_i)$-path in $T''$ such that $y \sim y''$, and $T_E(v_0) = T''_E(v''_0) =E$). Therefore, if $e\in T''_E(r_i)$, then $e\in T_E(r)$, a contradiction to the fact that $T$ classifies $E$. 
        This completes the proof of our claim. 
      \end{proofofclaim}
      Since $T''$ classifies $E$ (Claim~\ref{CI:1}), $dep(T'') \leq dep(T)$, and $T''$ contains at least one less vertex test node that has some dummy test node as its descendent than $T$, we reach a contradiction. Thus, none of the dummy test nodes in $T$ has a vertex test node as its ancestor. 
\end{proof}
\begin{figure}
    \centering
    \includegraphics[scale=0.8]{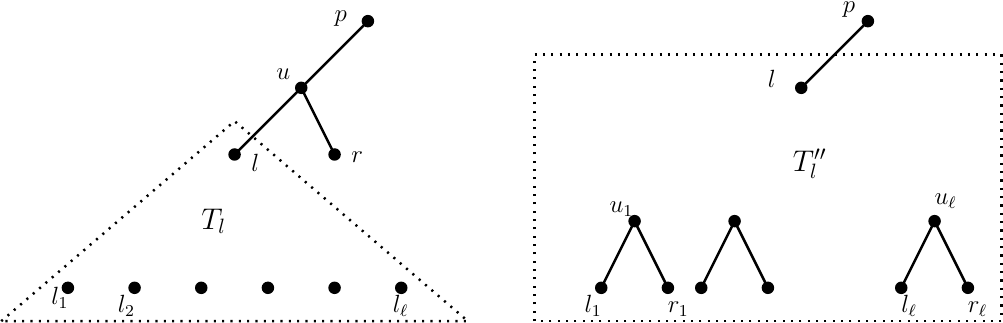}
    \caption{The construction of $T''_l$ from $T_u$.}
    \label{fig:DTD}
\end{figure}

Next, we have the following lemma.
\begin{lemma}\label{LI:otherSide}
  If $(E,d)$ is a Yes-instance of \DTD, then $I_j$, for $j\in [N]$, is a Yes-instance of \textsc{Vertex Cover}. 
\end{lemma}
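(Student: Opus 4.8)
The plan is to prove the contrapositive's converse directly: assuming a depth-$d$ \DT{} $T$ classifies $E$, extract from it, for each instance $I_j$, a small vertex cover of $G_j$. The key structural simplification is \Cref{LI:toBottom}, which lets me assume WLOG that no dummy test node has a vertex test node as an ancestor. This means every root-to-leaf path first performs some sequence of dummy tests (partitioning examples by their $d_0$-value, i.e. by block) and only afterwards performs vertex tests. First I would apply \Cref{LI:toBottom} to replace $T$ by such a normalized \DT{} of depth at most $d$.

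Next I would analyze the ``dummy part'' of $T$. Since $d_0$ takes the distinct value $i$ on every example of block $B_i$, and the dummy tests form an upper portion of the tree, \Cref{O:dummyTest} guarantees that each block $B_i$ is routed entirely into a single node $w_i$ where the transition from dummy to vertex tests occurs (i.e.\ the shallowest node on the path whose subtree contains $B_i$ and below which only vertex tests appear). Below $w_i$, the subtree $T_{w_i}$ must classify all of $B_i$ correctly using only vertex tests. The crucial counting step is to bound the depth available for this vertex part: the dummy portion of the tree must separate the $2N$ blocks, and since a binary tree separating $2N$ items needs depth at least $\log_2(2N) = \log_2 N + 1 = p$, at least $p$ of the $d = p+k$ levels are consumed by dummy tests on the path to $w_i$ for the blocks that are ``deepest.'' I would make this precise by arguing that for \emph{each} even block $B_{2j}$ (which encodes $E(\mathcal{I}_j)$), the portion of the path above $w_{2j}$ using dummy tests has length at least $p' $ for an appropriate bound, leaving depth at most $k$ for $T_{w_{2j}}$ to classify $B_{2j}$.

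The odd blocks $B_{2j-1}$ play the pinning role: by \Cref{OI:trivial}, each odd block encodes $\overline{E}(\mathcal{I})$ and \emph{cannot} be classified by a \DT{} of depth less than $k$. Since there are $N$ odd blocks interleaved among the $2N$ blocks, any dummy subtree separating all blocks must spend close to its full budget distinguishing them; I would use the fact that the $N$ odd blocks each demand a full depth-$k$ vertex subtree to force the dummy part to have depth essentially $p$ on every relevant branch, so that each even block $B_{2j}$ inherits at most $k$ remaining levels for its vertex subtree $T_{w_{2j}}$. Restricting $T_{w_{2j}}$ to the features $\{f_1,\dots,f_n\}$ (the only ones on which $B_{2j}$ varies, by the construction) gives a depth-$\le k$ \DT{} classifying $E(\mathcal{I}_j)$. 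By \Cref{P:HS} (the \HS{}/\DTD{} equivalence), this yields a hitting set of size at most $k$ for $\mathcal{I}_j$, equivalently a vertex cover of size at most $k$ for $G_j$, so $I_j$ is a Yes-instance.

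The main obstacle will be the depth-accounting argument that simultaneously handles all $N$ even blocks: I must show that \emph{no single} \DT{} of depth $d = p+k$ can both (a) separate all $2N$ blocks via dummy tests in its upper part and (b) leave more than $k$ vertex-test levels for any even block, given that the $N$ odd blocks each rigidly require depth $k$. The delicate point is that a clever tree might try to ``save'' depth on some blocks and spend it on others; the interleaving of the depth-$k$-rigid odd blocks among the even blocks is exactly what prevents this, forcing a balanced dummy portion of depth $p$ down to the level of each $w_i$. Making this rigidity-plus-separation argument airtight — rather than the naive ``$2N$ leaves need depth $\log_2(2N)$'' bound, which alone does not pin down the depth on individual branches — is where the real work lies, and I expect it to require carefully tracking, for each even block, the length of its dummy prefix against the $k$-lower-bound contributed by its neighboring odd blocks.
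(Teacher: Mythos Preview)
Your high-level plan matches the paper's: normalize via \Cref{LI:toBottom}, isolate the upper ``dummy'' subtree, and argue that each even block is left with at most $k$ levels of vertex tests. But two key steps are missing or underspecified.

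\medskip
\noindent\textbf{Block separation is asserted, not proved.} You write that ``the dummy portion of the tree must separate the $2N$ blocks'' and then invoke the $\log_2(2N)$ bound. But \Cref{O:dummyTest} only says each block stays together through dummy tests; it does \emph{not} rule out several blocks landing at the same transition node $w$. The paper proves this as a separate claim, and the mechanism is precisely the part of the construction you never use: every block $B_i$ contains a unique ``dummy example'' that is zero on all vertex features, and its sign alternates with the parity of $i$. If two blocks shared a node $w$, then (by a threshold argument) so would every block between them, in particular a consecutive pair $B_i,B_{i+1}$; their two dummy examples agree on every vertex feature but have opposite labels, so no subtree using only vertex tests can classify both. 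This is why the odd blocks are built from $\overline{E}(\mathcal{I})$ rather than $E(\mathcal{I})$---to flip the sign of the all-zero example---and your proposal does not exploit it.

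\medskip
\noindent\textbf{The depth-accounting has a clean global argument you are missing.} You correctly flag that the naive $\log_2(2N)$ bound does not control the dummy-prefix length on \emph{each} branch, and you plan to track, per even block, the prefix length against its odd neighbors. The paper avoids this entirely: once each leaf of the dummy subtree $T'$ holds exactly one block, it shows $T'$ is a \emph{complete} binary tree of depth exactly $p$. The lower bound is the leaf count. For the upper bound, take a deepest leaf $l$ of $T'$; its sibling $l'$ is also a leaf (by maximality of $l$), and the two blocks at $l,l'$ are consecutive, so one of them is odd and by \Cref{OI:trivial} needs a depth-$k$ subtree below it; if $dep(T')\ge p+1$ this forces $dep(T)>p+k=d$. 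Completeness then puts \emph{every} leaf of $T'$ at depth exactly $p$, so every block---in particular every even $B_{2j}$---gets at most $k$ remaining levels, and the extraction of a size-$k$ vertex cover for $G_j$ follows immediately. No per-block bookkeeping is needed.
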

\begin{proof}
    Let $T$ be a \DT with root $v_0$ such that $dep(T) \leq d$ and $T$ classifies $E$. Due to Lemma~\ref{LI:toBottom}, we can assume that none of the dummy test nodes has a vertex test node as its ancestor. Let $T'$ be a subtree of $T$ such that each node in $T'$ except $v_0$ has a dummy test node as its parent. We note that each leaf of $T'$ can be either a vertex test node in $T$ or a leaf node in $T$. Also, note that for each node $v$ of $T'$, $T_E(v) = T'_E(v)$. We will now discuss the properties of $T'$.

    \begin{claim}\label{CI:O1}
        For each leaf $l$ of $T'$, $T'_E(l)=B_i$, for some $i\in [2N]$. Moreover, for each $i\in [2N]$, there is some leaf $l$ of $T'$ such that $T'_E(l)=B_i$.
    \end{claim}
    \begin{proofofclaim}
        Let $l$ be a leaf of $T'$, and $p$ be the parent of $l$. (We can assume that $l$ has a parent, as otherwise, $T$ becomes an empty tree.) Observe that $l$ is either a leaf node or a vertex test node in $T$ such that none of the descendent of $T$ is a dummy test node. Since we assume that for each node $v$ of $T$, $T_E(v) \neq \emptyset$, there is some example $e \in T'_E(l)$. Let $e\in B_i$ for some $i\in [2N]$. Since each test node on the unique $(v_0,p)$-path in $T'$ (as well as in $T$) is a dummy test node, due to Observation~\ref{OI:1}, every example of the block $B_i$ is contained in $T'_E(l)$, i.e., $B_i \subseteq T'_E(l)$.  

        Now, to complete our proof, we only need to show that for every $j\neq i$, $j\in [2N]$, and $e' \in B_j$, it holds that $e'\notin T'_E(l)$. Targeting contradiction, let there be some $j\neq i$ such that $j\in [2N]$, $e' \in B_j$, and $e'\in T'_E(l)$. Then, due to Observation~\ref{OI:1}, $B_j \subseteq T'_E(l)$. WLOG assume that $i<j$. Then, we claim that $B_{i+1}$ (possibly $i+1=j$) is also contained in $T'_E(l)$. This is easy to see since for every dummy test node $v$ on the unique $(v_0,l)$-path, if $\lambda(v)> e(d_0) =i$ and $\lambda(v) > e'(d_0)=j$ (resp., $\lambda(v)\leq e(d_0) = i$ and $\lambda(v) \leq e'(d_0)=j$), then $\lambda(v) > i+1$ (resp., $\lambda(v) \leq i+1$) as well. Hence, $B_{i+1} \in T'_e(l)$. Therefore, $T'_E(l)$ contains two examples $e_i\in B_{i}$ and $e_{i+1}\in B_{i+1}$ such that for each vertex feature $f_j$, for $j\in [2n]$, $e_i(f_j) = e_{i+1}(f_j) = 0$, and $e_i$ is a positive example and $e_{i+1}$ is a negative example. 
        
        Let $l'$ is a leaf of $T$ such that $e_i\in T_E(l')$. Since each test node in the unique $(l,l')$-path is a vertex test node and $e_i$ and $e_{i+1}$ agree on every vertex feature, $e_{i+1} \in T_E(l')$. Hence $T_E(l')$ is non-uniform, which contradicts the fact that $T$ classifies $E$. This completes proof of our claim.
    \end{proofofclaim}

    Next, we claim that $T'$ is a complete binary tree of depth $p = \log_2(N)+1$.
    \begin{claim}\label{CI:O2}
        $T'$ is a complete binary tree and $dep(T') = p$.
    \end{claim}
    \begin{proofofclaim}
        Due to Claim~\ref{CI:O1}, since each $B_i$, for $i\in [2N]$, is contained in a unique leaf of $T'$, we know that $T'$ has at least $2N$ leaves. Therefore, it is easy to see that $dep(T') \geq p$, as any binary tree of depth at most $p-1 = \log_2(N)$ can have at most $N$ leaves.

        Next, we establish that $dep(T') \leq p$. Targeting contradiction, let $dep(T') \geq p+1$ and let $l$ be a leaf of $T'$ that maximizes the number of (dummy) test nodes in the $(v_0,l)$-path. Moreover, let $p$ be the parent of $l$, and $l'$ be the other child of $p$. Observe that $l'$ is also a leaf of $T'$. Moreover, let $B_i =T'_E(l)$ and $B_j = T'_E(l')$. We claim that either $i$ is odd or $j$ is odd. Note that $B_i \cup B_j =  T_E(p)$. Now, it can be established using arguments used in the proof of Claim~\ref{CI:O1} that $i$ and $j$ are consecutive numbers, i.e., either $i=j+1$ or $i=j-1$. Therefore, either $i$ or $j$ is odd. 

        WLOG assume that $i$ is odd. Since $T_E(l) = B_i$ and $T$ classifies $E$, the subtree $T_l$ should classify $B_i$. It follows from Observation~\ref{OI:trivial} that the depth of $T_l$ is at least $k$ since $i$ is odd and $T_l$ classifies $B_i$. Let $l''$ be a leaf of $T_l$ that maximizes the number of test nodes in the $(l,l'')$-path in $T_l$. Then, $dep(T)$ is at least the number of test nodes in $(v_0,l)$-path in $T'$ plus the number of test nodes in the $(l,l'')$-path in $T$. Therefore, $dep(T) \geq p+1 +k >d$, which is a contradiction since $dep(T) \leq d$. This establishes that $dep(T') = p$. 

        Finally, since there are $2N = 2^{p}$ blocks in $E$ and each block is contained in a unique leaf of $T'$, we have that $T'$ has at least $2^p$ leaves. Since the depth of $T'$ is $p$, it is only possible if $T'$ is a complete binary tree. 
    \end{proofofclaim}

    Finally, we have the following claim.
    \begin{claim}\label{CI:main}
        Each $I_j = (G_j,k)$, $j\in [N]$, is a Yes-instance of \textsc{Vertex Cover}. 
    \end{claim}
    \begin{proofofclaim}
        Targeting contradiction, let there be some $j\in [N]$ such that $(G_j,k)$ is a No-instance of \textsc{Vertex Cover}, and hence, $(\mathcal{I}_j,k)$ is a No-instance of \HS. Therefore, due to Proposition~\ref{P:HS}, $(E(\mathcal{I}_j),k)$ is a No-instance of \DTD. Hence, for every \DT $T_j$ such that $T_j$ classifies $E(\mathcal{I}_j)$, $dep(T_j)>k$. Moreover, recall that $T_j$ classifies $E(\mathcal{I}_j)$ iff $T_j$ classifies the examples of $B_j$.

        Now, let $l$ be the leaf of $T'$ such that $T'_E(l) = B_j$.  Since the subtree $T_l$ of $T$ classifies examples of $B_j$, we have that $dep(T_l) >k$. Moreover, since $T'$ is a complete binary tree of depth $p$ (Claim~\ref{CI:O2}), we have that $dep(T) > p+k=d$, a contradiction to the fact that $dep(T)=d$. This completes the proof of our claim.
    \end{proofofclaim}

    Therefore, due to Claim~\ref{CI:main}, we have that each $I_j = (G_j,k)$, $j\in [N]$, is a Yes-instance of \textsc{Vertex Cover}. This completes our proof.
\end{proof}

Finally, our construction and Lemmas~\ref{LI:oneSide} and \ref{LI:otherSide} imply that \DTD is AND-compositional. Moreover, since $\lambda_{max} \leq 3$ for $E$, $|F| = 2|V(G_i)|+1$, and $d = \log_2 N+k$, we have the following theorem as a consequence.
\begin{theorem}
    \DTD parameterized by $d+|F|$ is AND-compositional. Hence, \DTD parameterized by $d+|F|$ does not admit a polynomial compression even when $\delta_{max} \leq 3$, unless \NPoly.
\end{theorem}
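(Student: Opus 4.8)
The plan is to show that the construction just described is an AND-composition from \textsc{Vertex Cover} into \DTD parameterized by $d+|F|$, and then to appeal to the standard consequence (recalled in Section~\ref{S:prelim}) that an AND-composition from an NP-hard problem rules out polynomial compression unless \NPoly. The input consists of $N$ instances $I_1=(G_1,k),\ldots,I_N=(G_N,k)$ of \textsc{Vertex Cover}; as in the construction I may assume they share a common vertex count $n$ and budget $k$. This is legitimate because the composition need only handle inputs lying in a single class of the polynomial equivalence relation ``same $n$ and same $k$'', and within such a class padding with isolated vertices equalizes $n$ without changing any vertex cover number. The composition outputs the single instance $(E,d)$ together with the parameter $d+|F|$.

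Three conditions then need to be checked. The correctness requirement---that $(E,d)$ is a Yes-instance of \DTD if and only if every $I_j$ is a Yes-instance of \textsc{Vertex Cover}---is exactly the content of Lemmas~\ref{LI:oneSide} and~\ref{LI:otherSide}, so nothing further is needed there. The running-time requirement is immediate: $E$ is the union of $2N$ blocks, each of size polynomial in $n$ and computable in polynomial time from $\overline{E}(\mathcal{I})$ or some $E(\mathcal{I}_p)$, so the whole construction runs in time polynomial in $\sum_{j\in[N]}|I_j|$.

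The only remaining---and conceptually central---point is the parameter bound. The entire design is engineered so that $d+|F|$ depends on $N$ only logarithmically: the complete binary tree of depth $p=\log_2 N+1$ on the dummy feature $d_0$ routes the $2N=2^{p}$ blocks to distinct leaves using only $\log_2 N+1$ levels, after which each block needs a subtree of depth at most $k$. Hence $|F|=2n+1$ and $d=p+k=\log_2 N+k+1$, so
\[
d+|F| \;=\; \log_2 N + k + 2n + 2 \;\le\; \log_2 N + 3\max_{j\in[N]}|I_j| + 2,
\]
using $k\le n\le |V(G_j)|\le \max_{j\in[N]}|I_j|$. This is polynomial in $\max_{j\in[N]}|I_j|+\log N$, so condition~(2) of an AND-composition holds and \DTD parameterized by $d+|F|$ is AND-compositional.

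To conclude, since \textsc{Vertex Cover} is NP-hard, the incompressibility result for AND-compositions~\cite{bookKernelization} shows that \DTD parameterized by $d+|F|$ admits no polynomial compression unless \NPoly; and because the construction guarantees $\delta_{max}\le 3$, the bound holds already for $\delta_{max}\le 3$. I expect the main obstacle to be the parameter bound rather than the correctness, which is inherited wholesale from the two lemmas: one must verify that separating the $N$ sub-instances costs only $\log N$ in the depth---precisely what the balanced dummy tree of depth $\log_2 N+1$ provides---and that the normalization of the inputs to a common pair $(n,k)$ is carried out without inflating the parameter.
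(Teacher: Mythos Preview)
Your proposal is correct and follows essentially the same approach as the paper: the paper's own proof simply invokes the construction together with Lemmas~\ref{LI:oneSide} and~\ref{LI:otherSide} and records that $|F|=2n+1$ and $d=\log_2 N+k+1$, which gives the required parameter bound. If anything you are more explicit than the paper---you spell out the running-time check and the normalization to common $(n,k)$ via a polynomial equivalence relation, points the paper leaves implicit.
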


\section{Conclusion}
We considered a generalization of \DTL where the constructed \DT is allowed to disagree with the input \CI on a ``\textit{few}'' examples. We considered \DTDO and \DTSO which are generalizations of \DTD and \DTS, respectively, from both parameterized and kernelization perspectives. \DTSO and \DTDO consider additive errors in accuracy. As pointed out by~\cite{DTGeometry}, it may be desirable to design algorithms to respect Pareto-optimal trade-offs between size and measures of accuracy. In this regard,  \Cref{th:Whard,th:FPT} imply that some of the existing  \FPT algorithms for \DTS and \DTD cannot be directly generalized, and one needs to consider the number of misclassified examples as a parameter.  One interesting question in this direction can be if including number of features instead of $\delta_{max}$ as a parameter remove the dependence on $t$, i.e., are \DTSO and \DTDO \FPT parameterized by $s+|F|$ and $d+|F|$, respectively.   Moreover, one  interesting open question in regard of kernelization complexity of \DTS is the following.

\begin{question}
    Does \DTS parameterized by $s+|F|$ admit a polynomial kernel when $\delta_{max}$ is a fixed constant?
\end{question}

To establish the incompressibility of \DTD in \Cref{th:dtdIncompressible}, we use a \DT of depth $\log N+1$ to filter out blocks of examples; each leaf of this ``filter tree'' gets a unique block. The size of this filter tree is $2N$, and hence our construction directly cannot be used to prove the same result for \DTS parameterized by $s+|F|$ (since the parameter cannot have linear dependency on the number of instances in AND-composition).

Komusiewicz et al.~\cite{komusiewicz2023computing} recently considered the parameterized complexity of two generalizations of \DTS where the task is to design $\ell$ \DT{s} such that for each example, the majority of the trees agree with it. For $\ell=1$ both these generalizations model \DTS. They provided \FPT algorithms parameterized by $s+\delta_{max}+\ell+D_{max}$ and asked as an open question if these problems admit a polynomial kernel parameterized by the same parameters. Notice that Observation~\ref{th:HSIncompressible} answers this question negatively since the incompressibility is implied for these problems even by fixing $\ell=1$ and $D_{max}=2$. Moreover, note that since the trivial kernel in Theorem~\ref{th:trivial} is obtained by establishing an upper bound on the size of the \CI, the kernelization result applies to their generalizations as well.

Sometimes, in practice, it might be desirable that some examples are \textit{marked} ineligible to be outliers. For example, while examples from some \textit{noisy} part of the data are allowed to be outliers, an example from the \textit{robust} part of the data must  never be misclassified. We can model this setup using \DTSO and \DTDO by making $t+1$ copies of each robust example if we allow $E$ to be a multiset. Otherwise, if we need $E$ to be a set, we think that our algorithm can be adjusted to restrict outliers to be only from the ``noisy'' part of the data.

Since the appearance of the conference version of this paper, there have been some interesting developments. Staus et al.~\cite{staus2025witty} designed Witty, a fast and practical solver to compute minimum-sized decision trees, which is an heuristic-augmented implementation of an \FPT algorithm of Komusiewicz et al.~\cite{komusiewicz2023computing}.  Moreover, it is worth mentioning that Witty can solve \DTSO (with some extra overhead).

\section{Acknowledgments}
Meirav Zehavi acknowledges the support by European Research Council (ERC) project titled PARAPATH (101039913) and the ISF grant with number ISF--1470/24. Harmender Gahlawat acknowledges the support by the IDEX-ISITE initiative CAP 20-25 (ANR-16-IDEX-0001), the International Research Center ``Innovation Transportation and Production Systems'' of the I-SITE CAP 20-25, the ANR project GRALMECO (ANR-21-CE48-0004), and the ERC grant PARAPATH.



 \bibliographystyle{plainurl} 

\bibliography{main}

\end{document}